\newcommand{\ignore}[1]{}
\newcommand{\boxtheorem}{\hfill $\Box$}
\newcommand{\nit}[1]{{\it #1}}
\newcommand{\mc}[1]{\mathcal{ #1}}
\newcommand{\red}[1]{\textcolor{red}{#1}}
\newcommand{\e}{\mathbf{e}}
\newcommand{\comlb}[1]{{\vspace{2mm}\noindent \bf  {\red{COMM(LEO):}}}~ #1 \hfill {\bf
    END.}\\}
\newcolumntype{L}[1]{>{\raggedright\let\newline\\\arraybackslash\hspace{0pt}}p{#1}}
\newcolumntype{C}[1]{>{\centering\let\newline\\\arraybackslash\hspace{0pt}}p{#1}}
\newcolumntype{R}[1]{>{\raggedleft\let\newline\\\arraybackslash\hspace{0pt}}p{#1}}
\colorlet{LightViolet}{violet!40}
\colorlet{LightRed}{red!40}
\colorlet{LightOrange}{orange!40}
\colorlet{LightGreen}{green!40}
\colorlet{LightBlue}{blue!40}
\colorlet{DarkGreen}{green!50!black}
\colorlet{DarkRed}{red!70!black}
\colorlet{DarkCyan}{red!70!black}
\colorlet{DarkBlue}{blue!80!black}
{\definecolor{DarkOrange}{rgb}{1.0, 0.49, 0.0}
\definecolor{Airforceblue}{rgb}{0.36, 0.54, 0.66}


\newcommand{\defeq}{\stackrel{\mathrm{def}}{=}}










\newcommand{\R}{\mathbb R} 




\newcommand{\E}{\mathbf{E}}

\newcommand{\be}{\begin{enumerate}}
\newcommand{\ee}{\end{enumerate}}
\newcommand{\bi}{\begin{itemize}}
\newcommand{\ei}{\end{itemize}}
\newcommand{\beq}{\begin{equation}}
\newcommand{\eeq}{\end{equation}}

\newcommand{\bp}{\begin{proof}}
\newcommand{\ep}{\end{proof}}
\newcommand{\bcor}{\begin{cor}}
\newcommand{\ecor}{\end{cor}}
\newcommand{\bthm}{\begin{thm}}
\newcommand{\ethm}{\end{thm}}
\newcommand{\blmm}{\begin{lmm}}
\newcommand{\elmm}{\end{lmm}}
\newcommand{\bdefn}{\begin{defn}}
\newcommand{\edefn}{\end{defn}}
\newcommand{\bprop}{\begin{prop}}
\newcommand{\eprop}{\end{prop}}
\newcommand{\bconj}{\begin{conj}}
\newcommand{\econj}{\end{conj}}
\newcommand{\bopm}{\begin{opm}}
\newcommand{\eopm}{\end{opm}}
\newcommand{\bprm}{\begin{prm}}
\newcommand{\eprm}{\end{prm}}
\newcommand{\brmk}{\begin{rmk}}
\newcommand{\ermk}{\end{rmk}}
\newcommand{\bclm}{\begin{claim}}
\newcommand{\eclm}{\end{claim}}
\newcommand{\bex}{\begin{ex}}
\newcommand{\eex}{\end{ex}}

\theoremstyle{plain}                   
\newtheorem{thm}{Theorem}[section]
\newtheorem{lmm}[thm]{Lemma}
\newtheorem{prop}[thm]{Proposition}
\newtheorem{cor}[thm]{Corollary}

\theoremstyle{definition}              

\newtheorem{opm}[thm]{Open Problem}
\newtheorem{prm}[thm]{Problem}
\newtheorem{conj}[thm]{Conjecture}
\newtheorem{ex}[thm]{Example}

\newtheorem{defn}[thm]{Definition}

\newtheorem{rmk}[thm]{Remark}

\newcounter{claimCounter}
\newtheorem{claim}[claimCounter]{Claim}

\newtheorem*{example*}{Example}

\newtheorem*{lmm*}{Lemma}

\definecolor{Red}{RGB}{255,204,204}
\definecolor{Green}{RGB}{204,255,204}
\definecolor{Blue}{RGB}{204,204,255}

\newcommand{\set}[1]{\{#1\}}                    
\newcommand{\setof}[2]{\{{#1}\mid{#2}\}}        

\newcommand{\eat}[1]{}

\newcommand{\counter}{{\sc COUNTER}}
\newcommand{\resp}{{\sc RESP}}
\newcommand{\shap}{{\sc SHAP}}
\newcommand{\fico}{{\sc FICO}}

\def\counterScore{\mathrm{COUNTER}}
\def\respScore{\mathrm{RESP}}
\def\shapley{\mathrm{SHAP}}

\newtheorem{definition}[thm]{Definition}

\newtheorem{theorem}{Theorem}[section]
\newtheorem{proposition}[thm]{Proposition}

\usepackage[margin=.9in]{geometry}



%
\def\BibTeX{{\rm B\kern-.05em{\sc i\kern-.025em b}\kern-.08emT\kern-.1667em\lower.7ex\hbox{E}\kern-.125emX}}

%
\ignore{++
\copyrightyear{2018}
\acmYear{2018}
\setcopyright{acmlicensed}
\acmConference[Pods '19]{Pods '19: ACM Symposium on Symposium on Principles of Database Systems}{June 30 - July 5, 2019}{Amsterdam, The Netherlands}
\acmBooktitle{Pods '19: ACM Symposium on Symposium on Principles of Database Systems, June 30 - July 5, 2019, Amsterdam, The Netherlands}
\acmPrice{15.00}
\acmDOI{10.1145/1122445.1122456}
\acmISBN{978-1-4503-9999-9/18/06}
++}
%

%

%

%
\title{Causality-based Explanation of Classification Outcomes}

%
\author[1]{Leopoldo Bertossi\footnote{Member of the ``Millenium Institute for Foundational Research on Data (IMFD, Chile)''}}
\author[2]{Jordan Li}
\author[3]{Maximilian Schleich}
\author[3]{Dan Suciu}
\author[4]{Zografoula Vagena}

\affil[1]{Universidad Adolfo Ib\'a\~nez \& RelationalAI Inc.}
\affil[2]{Carleton University}
\affil[3]{University of Washington\\ \& RelationalAI Inc.}
\affil[4]{RelationalAI Inc.}

\date{}

\begin{document}

\maketitle 

%

\begin{abstract}
We propose a simple definition of an explanation for the outcome of a
classifier based on concepts from causality.  We compare it with
previously proposed notions of explanation, and study their
complexity.  We conduct an experimental evaluation with two real
datasets from the financial domain.
\end{abstract}


%
\ignore{
\begin{teaserfigure}
  \includegraphics[width=\textwidth]{sampleteaser}
  \caption{Seattle Mariners at Spring Training, 2010.}
  \Description{Enjoying the baseball game from the third-base seats. Ichiro Suzuki preparing to bat.}
  \label{fig:teaser}
\end{teaserfigure}
}
%

\ignore{++
\copyrightyear{2019}
\acmYear{2019}
\setcopyright{acmcopyright}
\acmConference[PODS'19]{38th ACM SIGMOD-SIGACT-SIGAI Symposium on Principles of Database Systems}{June 30-July 5, 2019}{Amsterdam, Netherlands}
\acmBooktitle{38th ACM SIGMOD-SIGACT-SIGAI Symposium on Principles of Database Systems (PODS'19), June 30-July 5, 2019, Amsterdam, Netherlands}
\acmPrice{15.00}
\acmDOI{10.1145/3294052.3322190}
\acmISBN{978-1-4503-6227-6/19/06}
++}

\maketitle

\pagestyle{plain}

\section{Introduction}\label{sec:intro}

Machine-learning (ML) models are increasingly used today in making
decisions that affect real people's lives, and, because of that, there
is a huge need to ensure that the models and their decisions are
interpretable by their human users.  Motivated by this need, there has
been a lot of interest recently in the ML community in studying {\em
  Interpretable models}~\cite{DBLP:journals/corr/abs-1811-10154}.
There is currently no consensus on what interpretability means, and no
benchmarks for evaluating
interpretability~\cite{DBLP:journals/corr/Doshi-VelezK17,DBLP:journals/cacm/Lipton18}.
The only consensus is that simpler models such as linear regression or
decision trees are considered more interpretable than complex models
like deep neural nets.  However, two general principles for
approaching interpretability have emerged in the literature that are
relevant to our paper.  The first is the idea of simplifying the
model.  Rudin~\cite{DBLP:journals/corr/abs-1811-10154} defines {\em
  explanation} as approximating a model so that it becomes
interpretable; thus, at a very high level, we will use the term {\em
  explanation} to mean a simple piece of information that helps
interpreting a model.  Citing Doshi{-}Velez and
Kim~\cite{DBLP:journals/corr/Doshi-VelezK17}\footnote{The attribute
  this quote to Lombrozo.}: {\em explanations are $\ldots$ the
  currency in which we exchanged beliefs}.  The second is the idea
that a good notion of explanation should be grounded in {\em
  causality}~\cite{pearl}.  This idea has frequently been mentioned in
the literature, but no consensus exists on how to convert it into a
formal definition.

There are two levels at which one can provide
explanations~\cite{DBLP:conf/kdd/Ribeiro0G16,DBLP:journals/corr/abs-1905-04610}.
A {\em global explanation} aims at explaining the model as a whole,
while a {\em local explanation} concerns a particular outcome, i.e.  a
single decision.  For example, consider a bank that uses a machine
learning model to decide whether to grant loans to individual
customers.  Then the global explanation concerns the entire model, for
example it explains it to a developer or an auditor, while, in
contrast, a local explanation concerns a single decision, for an
individual customer.  For example the customer applies for a loan, the
bank runs the model, and outcome is to deny the loan, and the customer
asks {\em why?} The bank needs to provide an explanation.  In this
paper we are concerned only with local explanations.  We argue that
they are of particular interest to the data management community,
because they need to be provided interactively, and they often require
processing a large amount of data, for example to compare the current
customer with the entire population of the bank's customers.

The golden standard for explanations are {\em black-box
  explanations}~\cite{DBLP:conf/kdd/Ribeiro0G16,DBLP:journals/corr/abs-1905-04610},
which are independent of the inner workings of the classifier.
LIME~\cite{DBLP:conf/kdd/Ribeiro0G16} explains an outcome by learning
an interpretable model locally around the outcome;
\shap~\cite{DBLP:conf/nips/LundbergL17,DBLP:journals/corr/abs-1905-04610}
explains an outcome by modeling it as a Shapley cooperative game and
assigning a score to each feature.
However, lacking a common benchmark for evaluating the {\em quality}
of the explanation, researchers are also considering {\em white-box}
explanations, based on the intuition that knowledge of the inner
workings of the model can help better explain a particular outcome.
For example, a successful explanation framework has been developed by
Chen et al.~\cite{DBLP:journals/corr/abs-1811-12615} for the Credit
Risk assessment problem, by co-designing the model and the
explanation.

The goal of this paper is three-fold.  First, we introduce a black-box
notion of explanation based on causality.  Our definition extends the
simplified notion of {\em actual cause} and {\em responsibility}
described in~\cite{DBLP:journals/pvldb/MeliouGMS11} to a new black-box
explanation score, called \resp.  Unlike \shap, which is grounded in
cooperative games, \resp\ is grounded in causality.  Second, we
examine the data-management challenge of computing black box
explanations.  Both \resp\ and \shap\ explanations require access to a
probability distribution over the population.  We consider two
probability spaces, the {\em product space} and the {\em empirical
  space}, and show that computing \shap\ is \#P-hard in the former,
while \resp\ is meaningless in the latter.  Our finding suggests that
future research needs to focus on designing realistic, yet tractable
probability spaces, in order to support the efficient computation of
black-box explanations.  Finally, we conduct an empirical evaluation
of the quality of black-box explanations.  We compare both \resp\ and
\shap\ to the white-box explanation for the Credit Risk
problem~\cite{DBLP:journals/corr/abs-1811-12615} and find that \resp\
is very close to the white-box explanation.  On closer inspection of
the cases where the two explanations differ reveals that the reason is
that the white-box explanation is based only on the current entity,
and ignores the population as a whole; in contrast, \resp\ is based on
causality and, thus, takes into account counterfactual outcomes, which
can be computed by examining the entire population.  We also compare
\resp\ and \shap\ on a second dataset (where no white-box explanation
is available) and measure their sensitivity to bucketization.

In this paper we only consider {\em feature-based explanations}, which rank the
features by some score, representing their importance for a particular outcome.
This is similar to previous work on
SHAP~\cite{DBLP:conf/nips/LundbergL17,DBLP:journals/corr/abs-1905-04610} and
to~\cite{DBLP:journals/corr/abs-1811-12615}.  Feature-based explanations are
attractive because they are very simple.  Other approaches have been proposed in
the literature:
Wachter et al.~\cite{wachter2017counterfactual}
  compute counterfactual explanations for entity $\e$ by solving an optimization
  problem that finds the entity $\e'$ closest to $\e$ for which
  $L(\e) \neq L(\e')$.
  LIME~\cite{DBLP:conf/kdd/Ribeiro0G16} provides as explanation a simple, linear
  model learned locally around the outcome. DeepLIFT~\cite{deeplift} computes
  importance scores for input features by comparing it with a reference entity.
  Goyal et al.~\cite{DBLP:journals/corr/abs-1907-07165} and Ghorbani et
  al.~\cite{DBLP:conf/nips/GhorbaniWZK19} introduce {\em concept-based
    explanations}, and Khanna et al.~\cite{DBLP:conf/aistats/KhannaKGK19}
  provide explanations consisting of items in the training data.
  Counterfactuals are also central to the notion of
  recourse~\cite{ustun:recourse}, which examines the feasibility of a user to
  change the decision of a model, and algorithmic
  transparency~\cite{datta2016}, which measures the degree of influence of
  inputs on outputs of decision systems.



In summary, this paper makes the following contributions:

\begin{itemize}
\item We define the explanation problem for a black-box classifier;
  Sec.~\ref{sec:problem}.
\item We introduce \resp, a black-box explanation score based on
  causality; Sec.~\ref{sec:da}
\item We describe the formal connection between \resp\ and \shap,
  showing that, while they are somewhat correlated, they are quite
  different; Sec.~\ref{sec:shapley}.
\item We establish the computational complexity of \resp\ and \shap\
  over two simple probabilistic models of the population;
  Sec.~\ref{sec:prob:space}.
\item We conduct an extensive experimental evaluation;
  Sec.~\ref{sec:rudin}.
\end{itemize}

\ignore{+++++
Providing explanations to results obtained from machine-learning (ML) models has been recognized as critical in may applications, and more recently has become an active area of research. This particularly the case when decisions are made on the basis of those models, and those decisions may have serious consequences. Since most of those models contain (or are) algorithms that have been learned from observed data, providing those explanations may not be easy (or possible); the components of those algorithms and their interactions  cannot be easily understood. These are seen as  {\em black-box models}. Furthermore, even when the details of the model are accessible, there is nothing like a universally accepted definition of what is an explanation for an outcome of the algorithm.

In AI, explanations have been investigated for a long time, e.g. in {\em model-based diagnosis} \cite[sec. 10.4]{DBLP:reference/fai/3} and {\em causality} \cite{pearl}, which are not unrelated \cite{DBLP:journals/mst/BertossiS17,DBLP:journals/ijar/BertossiS17}. In causality, counterfactual interventions on a causal model are central \cite{DBLP:journals/corr/abs-1301-2275}. They are hypothetical updates of variables in the model, to explore if and how the outcomes of the model change or not.

Counterfactual interventions can be used with ML models to providing explanations\cite{DBLP:conf/sp/DattaSZ16,DBLP:conf/nips/LundbergL17,DBLP:journals/corr/abs-1905-04610,DBLP:journals/corr/abs-1907-02582,DBLP:journals/corr/abs-1907-02509,DBLP:journals/pvldb/AbuzaidKSGXSASM18}. Interventions can be applied to input values or, in principle and whenever possible,  to internal variables of the model.  In this work we concentrate on explanations for outcomes of classification models, and we consider interventions on the inputs to the classification model, which is appropriate for black-box models. The next simple and informal example, taken from \cite{DBLP:conf/sp/DattaSZ16}, conveys the intuition.

 \begin{example}  A moving company makes hiring decisions based on the values of features, among them: (a) {\em ability to lift} and (b) {\em gender}. \  Mary, represented as a vector $\e$ of feature values, applies and is denied the job. That is, the classification model represented by a function $f$ that takes binary values returns: \ $f(\e) = 1$, where $\e$ is a record of feature values representing entity Mary.  To explain the decision, we can hypothetically change Mary's gender, $\nit{gender}(\e) = \text{F}$, into $\text{M}$, obtaining entity $\e_{{\it gender},\text{M}}$, where in $\e$ the value of feature \nit{gender} has been reset to $\text{M}$. We now observe that $f({\it gender},\text{M}) = 0$.  Thus, {\em gender}, or better its value $\text{F}$, can be seen as a counterfactual explanation for the initial decision.

 There are other alternatives we might consider, e.g. keeping the value of \nit{gender},  change the other feature values, to see if the original decision stays the same. If this  is the case, there would be evidence that the value \text{F} for \nit{gender} is relevant for the decision made \cite{DBLP:conf/sp/DattaSZ16}.
 \boxtheorem
\end{example}

Since many counterfactual interventions and of different kinds can be applied, we could think of aggregating the corresponding outcomes of the model to
define and associate (numerical) {\em scores} to inputs to the ML model. These scores reflect the different degrees of relevance of a feature value for the outcome of the model, providing a {\em ranking} of feature values.

In this work we investigate score-based explanations for outcomes of classification models. We also provide experimental results related to the classification of clients who apply for loans to financial institutions. We consider both {\em model-agnostic} scores, that is, that only use the input/output relation implicitly defined by the model, as opposed to the internal components of the model.  In the context of  data about loans that we use for our experiments, the relatively high-stake decision about granting the loan or not depends on the  outcome of the model.

 In this work, and as a particular score, we introduce {\em the DA-score}, that is directly based on counterfactual analysis. Scores are assigned to feature values, in this sense the score is {\em local}, i.e. it provides explanations at the feature-value level and for a single classification (as opposed to scores that globally quantify the relevance of a feature or a set thereof for classifications in general for a particular domain). In this case, one feature value is counterfactually changed at a time, but considering all possible alternative values for it, and taking an average of the resulting label values.

  The DA-score is also {\em model agnostic} in that their definitions and computations are independent from the internals of the classification model.
 The DA-score is simple and also easy to compute. From this point of view, it can be considered as an initial assignment of scores, that may lead to further and deeper investigation. Still, as we will show later on, DA-scores are very natural and in line with other more sophisticated scores.

We analyze three approaches to  model-agnostic, score-based explanations \cite{raiExpl}: the DA-score, the Shapley-score \cite{DBLP:conf/nips/LundbergL17}. We also introduce a model-dependent, i.e. non-model-agnostic {\em E-score}, or better ranking,  that is closely related to Rudin's score used to explain credit risk classifications with FICO data \cite{DBLP:journals/corr/abs-1811-12615}.

We compare the rankings provided by  model-agnostic and counter-factual-based approaches, the DA-score and Shapley-score, with the E-score that uses the elements of the classification model and  does not appeal to counterfactuals.

\red{We use the {\em FICO financial} dataset about loan repayment prediction, and the {\em Credit Card Fraud} dataset.\footnote{Available at \ \href{https://community.fico.com/s/explainable-machine-learning-challenge}{\underline{FICO Challenge}} \ and \ \href{https://www.kaggle.com/mlg-ulb/creditcardfraud}{\underline{Kaggle Credit Fraud}}, resp.} The former is used for simultaneous experiments with DA-, Shapley- and E-scores; whereas the latter is used to compare DA- and Shapley-scores.}

+++++++}

\section{Problem Definition} \label{sec:problem}

Fix a set of features $\set{F_1, \ldots, F_n}$, with finite domains
$\nit{Dom}(F_i)$.  We assume to have a classification model denoted by
$L$. Given values $x_1, \ldots, x_n$ in the corresponding domains, the
classifier returns a value $L(x_1, \ldots, x_n) \in \set{0,1}$.  The
inner workings of the classifier are irrelevant to us.  It may be a
decision tree, or a boosted model, or a deep neural network, or any
other program that, given input features, returns a 0 or a 1.  We will
only assume to have access to a black-box computing $L$, given values
of the features.  Our discussion also applies to the case when $L$
returns a continuous value in $\R$, but for simplicity we will assume
its value is in $\set{0,1}$.

We consider the scenario in which $L$ is applied to an entity $\e$,
for example an individual customer petitioning for a loan.  We
identify the entity by its features,
$\e = \langle x_1, \ldots, x_n\rangle$, and denote the classifier's
output by $L(\e)$.  Our goal is to provide an explanation for this
output $L(\e)$.  Throughout the paper we will assume that the output
$L=0$ represents the ``good'' outcome while $L=1$ represents the
``bad'' outcome, for example, $L(\e)=0$ means that the loan is
granted, while $L(\e)=1$ means the loan is denied.  We are mainly
interested in explaining the outcome $L(\e)=1$, but our discussion
also applies to entities with the good outcome 0.  We consider only
{\em feature-based} explanations, which assign a score to each feature
$F_i$, and return as explanation the feature with the highest score,
or a small set of features with highest scores.  For example, if the
explanation is the feature $\texttt{NumberOfInquiresLast7Days}$, then
we would tell the customer ``your loan was denied because of the
number of inquires to your credit history in the last 7 days''.  Banks
often deny loans when there are a large number of recent inquires to
the customers' credit history, in order to prevent customers from
quickly opening multiple credit accounts at independent banks.

In order to produce meaningful explanations, we will assume that, in
addition to the entity $\e$ and the black box classifier $L$,
we also have access to a probability space $\Omega$ on entities. For
example, this can be the known distribution of the population of
customers applying for loans.  A meaningful explanation should be
informed by typical customers over that population, as we explain
below.  We will write $\e \sim \Omega$ to denote the fact that $\e$ is
chosen at random from the space $\Omega$.


In this paper we study two black-box explanation scores.  The first is
our new proposal, called the \resp-score, and is grounded in the
principle of {\em actual cause}.  The second score, \shap-score, has
been proposed recently~\cite{DBLP:conf/nips/LundbergL17}, and is based
on the Shapley value of a cooperative game.  Both are black-box
explanations, in that they are oblivious to the inner workings of the
classifier $L$.  We evaluate empirically these two explanations, and
compare them to a white-box explanation defined by Chen et
al.~\cite{DBLP:journals/corr/abs-1811-12615}, for a specific
classification task.

Throughout the paper we denote the set of $n$ features by $\mc{F}$.
For any subset $S \subseteq \mc{F}$ we denote by $\e_S$ the
restriction of an entity $\e$ to the features in $S$.


\section{The \counter- and \resp-Scores}\label{sec:da}


We introduce here our novel black-box explanation score, by applying
the principle of causal reasoning.  Halpern and
Pearl~\cite{DBLP:journals/corr/abs-1301-2275} give a formal definition
of an actual cause, while Chockler and Halpern extend it to a degree of
causality, called
{\em responsibility}~\cite{DBLP:journals/jair/ChocklerH04}.  Both notions
were simplified and adapted to database
queries~\cite{DBLP:journals/pvldb/MeliouGMS11}.  We review them here,
then adapt them to feature-based explanations.

Fix an entity $\e^\star$, and assume its outcome is $L(\e^\star)=1$.
A {\em counterfactual
  cause}~\cite{DBLP:journals/corr/abs-1301-2275,DBLP:journals/pvldb/MeliouGMS11}
is a feature $F_i$ and a value $v$ such that $L(\e)=0$, where
$\e \defeq \e^\star[F_i := v]$ is the entity obtained from $\e^\star$
by setting $F_i = v$ and keeping all other values unchanged.  Then,
$(F_i,v)$ is a counterfactual cause, if, by changing only $F_i$ to
$v$, the outcome changes from $1$ to $0$, i.e. from ``bad'' to
``good''. Thus, the notion of a counterfactual applies to the
{\em pair} $(F_i, v)$, i.e. the value is important.

We adapt this notion to define the \counter-score of a feature $F_i$
(without any value).  The \counter-score is defined as the expected
counterfactual causality over the random choices of the values $v$:

\begin{definition} \label{def:counter}
  Fix an entity $\e^\star$.  The \counter-score of a feature $F_i$ is:
  \begin{align*}
    \counterScore(\e^\star,F_i) \defeq &{} L(\e^\star) - \E\left[L(\e) | \e_{\mc{F}-\set{F_i}} = \e^\star_{\mc{F}-\set{F_i}}\right]
  \end{align*}
  Here, the conditional expectation is taken over the random entity
  $\e \sim \Omega$ conditioned on having the same features as
  $\e^\star$ except for $F_i$.
\end{definition}

To see the intuition behind our definition, fix some value $v$, and
let $\e = \e^\star[F_i := v]$.  If $(F_i,v)$ is a counterfactual
cause, then the difference $L(\e^\star) - L(\e)$ is 1; otherwise the
difference is 0.  The \counter-score is simply the expected value of
this difference.

We explain now the importance of the probability space $\Omega$, over
which we take the expectation $\E[\ldots]$.  Suppose a bank is
deciding loan applications for customers in the US.  A customer,
Alice, was denied the loan, and she asks for an explanation.  After
examining millions of prior customers, the clerk notices that one
customer had exactly the same features as Alice, but was born in
Luxembourg, and was granted the loan. Luxembourg is a tiny state in
Europe whose population is financially very savvy, which explains why
that loan was granted while Alice was denied.  Thus,
place-of-origin$=$Luxembourg is a counterfactual cause.  However, it
is a poor explanation, because it is not representative of the
population for Alice.  The definition of \counter-score takes this
into account:
$\counterScore(\texttt{Alice}, \texttt{place-of-origin})$ is very
small, because the vast majority of customers identical to Alice
except for place-of-origin were also denied the loan application.
Thus, our bank will provide Alice with a different explanation, one
having a larger \counter-score.

It is possible for an entity $\e^\star$ to have no counterfactual
cause.  This happens when, changing any single feature to any other
value, the modified entity $\e$ has the same outcome
$L(\e^\star) = L(\e)= 1$.  A pair $(F_i,v)$ is called an {\em actual
  cause} with contingency $(\Gamma, \mathbf{w})$, where $\Gamma$ is a
set of features and $\mathbf{w}$ is a set of values, if $(F_i,v)$ is a
counterfactual cause for
$\e^\star[\Gamma :=
\mathbf{w}]$~\cite{DBLP:journals/corr/abs-1301-2275,DBLP:journals/pvldb/MeliouGMS11}.
In other words, denoting $\e' \defeq \e^\star[\Gamma := \mathbf{w}]$
and $\e'' \defeq \e'[F_i := v]$, the pair $(F_i,v)$ is an actual cause
for $\e^\star$ with contingency $(\Gamma, \mathbf{w})$ iff
$L(\e^\star) = L(\e')=1 \neq L(\e'')=0$.  For an illustration, suppose
Alice is 30 years old and denied the loan.  Setting
$\texttt{NumberOfInquiresLast7Days}=0$ will not change the outcome,
however, if Alice where 5 years older, then setting
$\texttt{NumberOfInquiresLast7Days}=0$ would grant Alice the loan.
Thus, $\texttt{NumberOfInquiresLast7Days}=0$ is an actual cause with
contingency set $(\texttt{Age},35)$.  Chockler and
Halpern~\cite{DBLP:journals/corr/abs-1301-2275} defined the {\em
  responsibility} of an actual cause $(F_i,v)$ with contingency
$(\Gamma,\mathbf{w})$ as $1/(1+|\Gamma|)$; intuitively, smaller
$\Gamma$ results in a larger responsibility, in particular a
counterfactual cause has responsibility 1, because then
$\Gamma=\emptyset$.  

We introduce now the \resp-explanation score.\footnote{In
  \cite{DBLP:journals/mst/BertossiS17} contingency sets for causes at
  the attribute- value level for query answers from databases were
  defined along similar lines.}


\begin{definition}
  Fix an entity $\e^\star$, and a contingency $(\Gamma,\mathbf{w})$
  such that $L(\e^\star) = L(\e')$, where
  $\e' \defeq \e^\star[\Gamma := \mathbf{w}]$.  The \resp-score of a
  feature $F_i$ w.r.t. to the contingency $(\Gamma,\mathbf{w})$ is:
  \begin{align*}
    \respScore(\e^\star,F_i,\Gamma,\mathbf{w}) \ \defeq \ \frac{L(\e') - \E\left[L(\e'') | \e''_{\mc{F}-\set{F_i}} = \e'_{\mc{F}-\set{F_i}}\right]}{1+|\Gamma|}
  \end{align*}
  We define $\respScore(\e^\star, F_i)$ 
  as $\max_{\mathbf{w}} \respScore(\e^\star,F_i, \Gamma,\mathbf{w})$,
  where $\Gamma$ is the smallest set for which this score is $\neq 0$.
\end{definition}


In other words, the \resp-explanation score is defined as follows.  We first set $\Gamma=\emptyset$, in which case
$\respScore(\e^\star,F_i,\Gamma,\emptyset) = \counterScore(\e^\star,F_i)$.  If
this is non-zero, then its value defines the \resp-score.  If it is zero, then
we increase $\Gamma$, until we find a contingency such that
$\respScore(\e^\star,F_i,\Gamma,\mathbf{w}) \neq 0$; this value represents the
\resp-score.

\section{The Shapley-Score}\label{sec:shapley}

Motivated by machine learning applications in the medical domain,
Lundberg and Lee~\cite{DBLP:conf/nips/LundbergL17} have proposed the
Shapley explanation score of a feature $F_i$, in short \shap-score.
This score is not grounded in causality, but instead it is based on
the {\em Shapley value} of cooperative
games~\cite{shapley:book1952}.\footnote{In \cite{DBLP:journals/corr/abs-1904-08679} Shapley scores have been assigned to database tuples to quantify their contribution to query results.}  We review it here
briefly.

Fix an entity $\e^\star$ and a feature $F_i$.  Let $\pi$ be a
permutation on the set of features $\mc{F}$; in other words, $\pi$
fixes a total order on the set of features.  Denote by $\pi^{<F_i}$
the set of features $F_j$ that come before $F_i$ in the order $\pi$;
similarly, $\pi^{\leq F_i}$ denotes $\pi^{<F_i} \cup \set{F_i}$.  The
{\em contribution} of the feature $F_i$ is defined as:
\begin{align*}
c(\e^\star, F_i, \pi) \! \defeq\!\! \E\left[L(\e) | \e_{\pi^{\leq F_i}} = \e^\star_{\pi^{\leq F_i}}\right]\!-\E\left[L(\e) | \e_{\pi^{< F_i}} = \e^\star_{\pi^{< F_i}}\right]
\end{align*}

\begin{definition}
  Fix an entity $\e^\star$.  The \shap-score of a feature $F_i$ is the
  average contribution of $F_i$ over all permutations $\pi$, in other words:
  \begin{align*}
    \shapley(\e^\star, F_i) \defeq & \frac{1}{n!} \sum_\pi c(\e^\star, F_i, \pi)
  \end{align*}
  where $n = |\mc{F}|$ is the number of features.
\end{definition}

The intuition is as follows.  Extend the classifier $L$ to entities
with missing features, as follows.  If $\e^\star_S$ is an entity with
features $S \subseteq \mc{F}$, then define $L'$ to be the expected
value over the missing features:
$L'(\e^\star_S) \defeq \E[L(\e) | \e_S = \e^\star_S]$.  In particular,
when {\em all} features are missing, then $L'(\emptyset) = \E[L(\e)]$,
and when all features are present then $L'(\e^\star) = L(\e^\star)$.
Consider the following process: we present the features of $\e^\star$
to the classifier $L'$ one by one, in some order $\pi$.  The output of
$L'$ changes, step by step, from $\E[L(\e)]$ to $L(\e^\star)$.  The
contribution $c(\e^\star, F_i, \pi)$ of the feature $F_i$ represents
the amount of change observed when we introduce $F_i$.  The
\shap-score simply averages this contribution over all permutations
$\pi$.

The appeal of the \shap-score is that it splits the difference
$L(\e^\star) - \E[L(\e)]$ among the $n$ features $F_1, \ldots, F_n$, which means:
\begin{align}
\sum_i \shapley(\e^\star,F_i) \ = & \ L(\e^\star) - \E[L(\e)] \label{eq:shap:sum}
\end{align}
This follows immediately from the fact that, for any fixed permutation
$\pi$, $\sum_i c(\e^\star, F_i, \pi) = L(\e^\star) - \E[L(\e)]$.
Several good properties of the \shap-score are discussed
in~\cite{DBLP:journals/corr/abs-1905-04610}.  However, unlike the
\resp-score, there is no causal semantics associated to the
\shap-score.  In particular, it is possible for the \shap-score of
some feature to be $<0$, a fact that we observed in our experiments.

We now explain the connection between the \shap-score and the
causality-based scores introduced in Sec.~\ref{sec:da}.  Fix a set
$S \subseteq \mc{F}-\set{F_i}$, and define the contribution of $F_i$
w.r.t. $S$ as:

\begin{align*}
  c'(\e^\star, F_i, S)  \defeq & \ \E\left[L(\e) | \e_{S \cup \set{F_i}}=\e^\star_{S \cup \set{F_i}}\right] -\E\left[L(\e) | \e_{S}=\e^\star_{S}\right]
\end{align*}
For a number $0 \leq \ell < n$, denote by
${\mc{F}-\set{F_i} \choose \ell}$ the subsets of size $\ell$ of
$\mc{F}-\set{F_i}$.  We define the \shap-score at level $\ell$ as
\begin{align}
  \shapley(\e^\star, F_i, \ell) \defeq \ & \ \frac{\ell!(n-\ell-1)!}{n!} \sum_{S \in {\mc{F}-\set{F_i} \choose \ell}} c'(\e^\star, F_i, S)
\label{eq:shapley:2}
\end{align}

It is immediate to check that the \shap-score is the sum of all $n$
levels,
$\shapley(\e^\star, F_i) = \sum_{\ell = 0,n-1} \shapley(\e^\star, F_i,
\ell)$.  Using this property, we prove the following connection
between the \shap-score and the causality-based scores:

\begin{proposition}
$\!\shapley(\e^\star\!, F_i, n-1)\!=\!\frac{1}{n}\counterScore(\e^\star\!, F_i)$.
\end{proposition}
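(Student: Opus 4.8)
The plan is to simply unwind the definition of $\shapley(\e^\star,F_i,\ell)$ from Equation~\eqref{eq:shapley:2} at the specific level $\ell=n-1$ and observe that it collapses to a single term which coincides, up to the factor $1/n$, with $\counterScore(\e^\star,F_i)$ from Definition~\ref{def:counter}.

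First I would note that $\mc{F}-\set{F_i}$ has exactly $n-1$ elements, so the collection $\binom{\mc{F}-\set{F_i}}{n-1}$ of its size-$(n-1)$ subsets consists of the single set $S=\mc{F}-\set{F_i}$. Hence the sum in \eqref{eq:shapley:2} has just one summand. Next I would evaluate the binomial-style coefficient: with $\ell=n-1$ we get $\frac{\ell!(n-\ell-1)!}{n!}=\frac{(n-1)!\,0!}{n!}=\frac{1}{n}$. Therefore $\shapley(\e^\star,F_i,n-1)=\frac{1}{n}\,c'(\e^\star,F_i,\mc{F}-\set{F_i})$.

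Then I would expand $c'(\e^\star,F_i,\mc{F}-\set{F_i})$ using its definition. Its first term is $\E[L(\e)\mid \e_{(\mc{F}-\set{F_i})\cup\set{F_i}}=\e^\star_{(\mc{F}-\set{F_i})\cup\set{F_i}}]=\E[L(\e)\mid \e=\e^\star]=L(\e^\star)$, since conditioning on all $n$ features pins down the entity. Its second term is exactly $\E[L(\e)\mid \e_{\mc{F}-\set{F_i}}=\e^\star_{\mc{F}-\set{F_i}}]$. Subtracting, $c'(\e^\star,F_i,\mc{F}-\set{F_i})=L(\e^\star)-\E[L(\e)\mid \e_{\mc{F}-\set{F_i}}=\e^\star_{\mc{F}-\set{F_i}}]$, which is precisely $\counterScore(\e^\star,F_i)$ by Definition~\ref{def:counter}. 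Combining the two displays gives $\shapley(\e^\star,F_i,n-1)=\frac{1}{n}\counterScore(\e^\star,F_i)$, as claimed.

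There is essentially no obstacle here: the statement is a direct bookkeeping consequence of the definitions, the only mild subtlety being the observation that at the top level $\ell=n-1$ the averaging set is a singleton, so no genuine combinatorial averaging takes place. I would just make sure the identification $L'(\e^\star)=L(\e^\star)$ (conditioning on all features) is stated explicitly, since it is the one place where the meaning of the conditional expectation is used.
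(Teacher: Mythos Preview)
Your proposal is correct and matches the paper's approach exactly: the paper simply states that the proof follows immediately from the definitions, and your argument is precisely the straightforward unwinding of \eqref{eq:shapley:2} at $\ell=n-1$ and of Definition~\ref{def:counter} that justifies this claim.
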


The proof follows immediately from the definitions.  This ``connection'' reveals
more about how different the \resp- and \shap-scores are.  Level $n-1$ is only
one of the many levels of the \shap-score, while the \resp-score agrees with the
\counter-score only when the contingency is empty. Thus, while the \resp- and
\shap-scores are somewhat correlated, they are derived from different principles
(causality vs. Shapley value) and have different mathematical definitions.

\ignore{+++ OLD SUBSEC ON APPROX

\subsection{Optimizing and approximating Shapley}\label{sec:optSHAP}

Computing Shapley-values has  high computational complexity   \cite{DBLP:journals/tcs/MatsuiM01}. Despite the exponential number of subsets of features to consider, for the kind of game function at hand we  can compute Shapley-scores in significantly less time than with the naive computation. 

Actually, as the number of features in $S$ increases, the values of $\hat{v}_{\e^\star}(S)$  eventually become $L(\e^\star)$, i.e. $0$ or $1$. Then, at some point, records with identical values as $\e^\star$ for some set of features will get  the same classification as $\e^\star$. About this, notice that $\{\e~|~\e_{S \cup \{F\}} = \e^{\star}_{S \cup \{F\}} \subseteq \{\e~|~\e_S = \e^{\star}_S\}$. Since records conditioned on $S \cup \{F\}$ are contained among records conditioned on $S$, their labels must be the same. So, if all records in $\{\e~|~\e_S = \e^{\star}_S\}$ have label $L(\e^\star)$, then $\hat{v}_{\e^\star}(S \cup \{F\}) = \hat{v}_{\e^\star}(S) = L(\e^\star)$. \ Accordingly,
 to compute Shapley-scores, we can avoid computing $\hat{v}_{\e^\star}(S)$ for sets of features for which $S' \subsetneqq S$ and $\hat{v}_{\e^\star}(S) = L(\e^\star)$. We do not do this, but we compute $\hat{v}_{\e^\star}(S)$ for increasing sizes of $S$ until all sets of the same size have the property $\hat{v}_{\e^\star}(S) = L(\e^\star)$, which involves redundant computation, but is still much simpler than the naive one \cite{raiExpl}.

The same idea  can be used to compute approximate Shapley-scores. We proceed as in  (\ref{eq:shapley-AB}), but splitting the computation in terms of increasing sizes of $B$, i.e. summing the contributions of different sizes $|B|$, from  $0$ on. For example,  for feature \nit{credit.policy} and a particular entity $\e^\star$, Figure \ref{fig:app} shows the contributions of all sets of size $|B| + 1$ that include  {\it credit.policy} minus the contributions of sets of size $|B|$ that do not include it, i.e. 
$\frac{|B|!\cdot (13-|B|-1)!}{|13|!}(\hat{v}_{\e^\star}(B\cup\{{\it credit.policy}\})-\hat{v}_{\e^\star}(B))$. \ The last row shows the exact contributions from different set-sizes to the Shapley-score.

\vspace{-3mm}
\begin{figure}[H]
{\footnotesize \begin{verbatim}
               |B|     contribution to 'credit.policy' score
                0       0.0005988292752998683
                1       0.0024549654728629883
                2       0.0011230858377747274
                3       0.0003076752667937749
                4       7.134009223873361e-05
                5       1.618751618752412e-05
                6       0.0
               full     0.004572083461157617      \end{verbatim} }
      \vspace{-3mm}  \caption{Contributions to  Shapley-score computation} \label{fig:app}
  \end{figure}              
                
\vspace{-3mm}An approximation to the Shapley-score for a feature can be obtained by adding all the contributions of sizes 0 up to, say 3, obtaining around 0.004485. In our experiments we restricted the set sizes to $4$.
 \ Notice that, despite this restriction in size, still all features appear in subsets considered for the computation.  \ Figures  \ref{fig:apprx_shap_freq} and  \ref{fig:apprx_shap_freq_0_1} show the results, for the latter with ties solved as in Section \ref{sec:expsShap}. \ Since we limited the computation to the point where we see negligible difference between exact and  approximate Shapley-scores, the most important features chosen by approaches are the same.

\begin{figure}
\begin{center}
  \includegraphics[width=8.5cm]{FIGS/aprx-shapley-best-feature-freq-label1.png}
    \end{center}
   \caption{Relative frequency with which each feature has the highest approximate Shapley-score for test entities classified as 1. }\label{fig:apprx_shap_freq}
\end{figure}

Given that one would usually not know {\em a priori} where to stop the  computation, we can take a small subset of the data and compute the exact Shapley-scores.  The maximum layer can then be decided based on the observed differences.  This limit can then be imposed on the approximate computation of the Shapley-scores.

++++}


\section{Probability Spaces and Algorithms} \label{sec:prob:space}

A key difficulty in computing the \resp- and \shap-scores consists in
defining the probability space $\Omega$.  In practice we do not have
access to the population defining $\Omega$, but only to a sample, for
example the training data, or the test data $T$.  The \resp- and
\shap-scores require the computation of many conditional expectations,
and it is not possible to properly estimate them from a sample $T$.  In this
paper we propose two simple approaches to defining $\Omega$, and study
how the \resp- and \shap-scores can be computed in each case.

In this section we assume to have a dataset $T$ with $N$ tuples and
$n+1$ attributes, $F_1, \ldots, F_n, C$, where each row represents an
entity $\e$ and $C$ represents a count, i.e. the number of times $\e$
occurs in the sample.  We denote by $D_i \defeq \Pi_{F_i}(T)$, the
domain of the feature $F_i$.  Our goal is to define a probability
space $\Omega$ over $D_1 \times \cdots \times D_n$ that is a
generative model for $T$, over which we compute the \resp- and
\shap-scores.

\subsection{The Product Space}

Let $p(F_i=x)$ be the observed marginal probability of the value
$F_i=x$ in the data $T$.  The {\em product space}, $\Omega_P$, is
defined by choosing all feature values independently:
\begin{align*}
  p(\langle x_1, \ldots, x_n\rangle) \defeq & \prod_i p(F_i = x_i)
\end{align*}

The advantage of the product space is that it covers the entire
domain $D_1 \times \cdots \times D_n$, and it preserves the marginal
probabilities of each feature.  On the negative side, $\Omega_P$ does not capture any correlations.

\begin{figure}
  \begin{center}
  \begin{minipage}{5.0\linewidth}
    \begin{tabbing}
      \texttt{//} \= \texttt{create views from all domains:} \\
      \texttt{create view D$_1$ as  select F$_1$, sum(C) from T group by F$_1$}\\
      \texttt{create view D$_2$ as  select F$_2$, sum(C) from T group by F$_2$}\\
      \texttt{$\ldots$} 
      \\
      \texttt{// Compute the RESP-score of F3 with $\Gamma= \set{F1,F4}$:}  \\
      \texttt{with temp as} \\
      \>\texttt{(select y.F$_1$, z.F$_4$, L(y.F$_1$, F$_2^\star$, x.F$_3$, z.F$_4$, F$_5^\star$, $\ldots$)*sum(x.C)/M as S}\\
      \>\texttt{from D$_1$ y, D$_4$ z, D$_3$ x} \\
      \>\texttt{group by y.F$_1$, z.F$_4$} \\
      \>\texttt{having L(y.F$_1$, F$_2^\star$, F$_3^\star$, z.F$_4$, F$_5^\star$, F$_6^\star$, $\ldots$)$=$1)}\\
      \texttt{select max(1-S)/3 from temp}
    \end{tabbing}
  \end{minipage}
    \end{center}
  \vspace{-.8em}
\caption{Illustration of how to compute the \resp-score over the product domain
  $\Omega_P$ using SQL queries.  $T$ is the input data,
  $M := \sum_{\e \in T} \e.C$ is a constant, and the classifier $L(\cdots)$ is a
  User Defined Function.  The input entity $\e^\star$ is given by constants
  $F1^\star, F2^\star, \ldots$, and we assume $L(F1^\star, F2^\star, \ldots)=1$.
  The figure shows how to compute the \resp-score for $F_3$, with contingency
  $F_1, F_4$.  The division by 3 represents the division by $1+|\Gamma|$.}
  \label{fig:algorithm:resp} \vspace*{-1em}
\end{figure}

{\bf Score Computation Algorithm} We first show how to compute
the \resp-explanation.  For that we need to compute the \resp-score of
each feature, then return the feature with highest score.  For a fixed
feature $F_i$, we start by computing the \counter-score, by applying
Def.~\ref{def:counter} directly:
\begin{align*}
  \E\left[L(\e) | \e_{\mc{F}-\set{F_i}} = \e^\star_{\mc{F}-\set{F_i}}\right] = & \sum_{x \in D_i} L(\e^\star[F_i := x]) p(F_i = x)
\end{align*}
Thus, we need to perform a single scan over the domain $D_i = \Pi_{F_i}(T)$ and
at most $|D_i|$ calls to the oracle $L$. In practice, most often
$\counterScore \neq 0$, and then this is the \resp-score of the feature.
Otherwise, we consider contingency sets of size 1, 2, $\ldots$ until we find a
non-zero score.

 We can compute the \resp-score for $F_i$ using SQL queries. First,
  we create a view of the domain $D_j$ for each feature $F_j \in
  \mathcal{F}$. Then, given a set $\Gamma$ of features, we can compute the
  $\max_{\bm w}\resp(\e, F_i, \Gamma, \bm w)$ in one query that computes an
  aggregate over the Cartesian product of the domains of $F_i$ and each feature in
  $\Gamma$.  Fig.~\ref{fig:algorithm:resp} sketches the queries for the case
  when the feature $F_i$ is $F_3$, and the contingency set is $\{ F_1, F_4\}$.

We now turn to the task of computing the \shap-score.  Unfortunately,
the \shap-score is intractable in this model. We show that it is
\#P-hard, even if classifier $L$ is a white-box:

\begin{theorem}
  Suppose all features are binary, $F_i \in \set{0,1}$.  Then the
  following problem is \#P-hard: {\em ``Given a classifier $L$
    specified as a monotone 2CNF function with variables
    $F_1, \ldots, F_n$, and an entity $\e^\star \in \set{0,1}^n$,
    compute the \shap-scores of all its features''.}
\end{theorem}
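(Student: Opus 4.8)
The plan is to reduce from the problem of counting the satisfying assignments of a monotone 2CNF formula; this is \#P-complete, being equivalent to counting the independent sets (equivalently, the vertex covers) of a graph. The only property of the \shap-score that the reduction uses is the efficiency identity~\eqref{eq:shap:sum}, namely $\sum_{i=1}^n \shapley(\e^\star,F_i) = L(\e^\star) - \E_{\e\sim\Omega}[L(\e)]$, which holds for an arbitrary probability space $\Omega$. Hence an oracle that outputs all $n$ \shap-scores lets us compute $\E_{\e\sim\Omega}[L(\e)]$ with a single oracle call, one evaluation of $L$ on $\e^\star$, and $O(n)$ further arithmetic operations. It therefore suffices to exhibit a family of monotone 2CNF classifiers, together with product spaces, for which $\E_{\e\sim\Omega}[L(\e)]$ is a trivial rescaling of a \#P-hard count.

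Given a monotone 2CNF formula $\varphi(x_1,\dots,x_n) = \bigwedge_k (x_{i_k}\vee x_{j_k})$ over $n$ Boolean variables, I would build the instance in which the classifier is $L \defeq \varphi$ itself (already in the required monotone-2CNF form), the entity is $\e^\star \defeq \langle 1,\dots,1\rangle$, and $\Omega_P$ is the product space in which each feature is an independent fair coin, $p(F_i = 1) = 1/2$ for all $i$. These marginals are exactly those of, e.g., the two-row dataset consisting of $\langle 0,\dots,0\rangle$ and $\langle 1,\dots,1\rangle$ with equal counts; alternatively, if one regards the marginals of $\Omega_P$ as part of the input, one simply sets them all to $1/2$. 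Because $\varphi$ is monotone, $L(\e^\star) = \varphi(1,\dots,1) = 1$; and because the features are i.i.d.\ fair coins, $\E_{\e\sim\Omega_P}[L(\e)] = \Pr_{\e\sim\Omega_P}[\varphi(\e)=1] = N(\varphi)/2^n$, where $N(\varphi)$ is the number of satisfying assignments of $\varphi$.

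Consequently, if all \shap-scores could be computed in polynomial time, then by~\eqref{eq:shap:sum} we could compute $\sum_i \shapley(\e^\star,F_i) = 1 - N(\varphi)/2^n$ in polynomial time, and hence recover $N(\varphi) = 2^n\bigl(1 - \sum_i \shapley(\e^\star,F_i)\bigr)$ exactly. Since counting the satisfying assignments of a monotone 2CNF is \#P-hard, computing all \shap-scores is \#P-hard as well --- even though $L$ is given explicitly, as a white-box monotone 2CNF. The argument is short once~\eqref{eq:shap:sum} is available; the points that warrant care are (i) citing the \#P-completeness of monotone 2CNF model counting (via counting independent sets / vertex covers) rather than merely of general \#2SAT, and (ii) checking that $\Omega_P$ admits uniform marginals so that $\E[L(\e)]$ is genuinely proportional to the hard count. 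Both are routine. One could instead try to reason directly about a single feature's \shap-score, but exploiting the efficiency identity is by far the cleanest route, and it fits the theorem's phrasing of computing the \shap-scores of \emph{all} features.
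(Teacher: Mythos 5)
Your proposal is correct and is essentially the paper's own proof: both reduce from counting satisfying assignments of a monotone 2CNF via the uniform product space (realized by the two-row dataset $\langle 0,\dots,0\rangle$, $\langle 1,\dots,1\rangle$), take $\e^\star = \langle 1,\dots,1\rangle$, and recover $\E[L(\e)] = \#L/2^n$ from the efficiency identity~\eqref{eq:shap:sum}. No substantive differences.
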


\begin{proof}
  The following problem is \#P-complete~\cite{DBLP:journals/siamcomp/Valiant79}:
  {\em ``Given a monotone 2CNF Boolean formula
    $L = \bigwedge_{(i,j) \in E} (F_i \vee F_j)$ (where
    $E \subseteq [n] \times [n]$), compute the number of truth assignments
    $\#L$''}.  It follows immediately that the following problem is also
  \#P-hard: compute the probability $p(L) \defeq \#L/2^n$, when each variable
  $F_i$ is set independently to $1$ with probability 1/2.  We describe
  polynomial-time Turing reduction from the \shap-scores computation problem to
  the probability computation problem.  Let $L$ be any monotone 2CNF for which
  we want to compute $p(L)$.  Consider the following dataset $T$ with two
  entities: $\langle 0, 0, \ldots, 0 \rangle$ and
  $\langle 1, 1, \ldots, 1 \rangle$, where the count is $C=1$ for both entities.
  Then each marginal probability is $p(F_i = 0) = p(F_i = 1) = 1/2$, and the
  product space is precisely that in which we want to compute $p(L)$, i.e. all
  Boolean variables $F_i$ are set independently to true with probability
  $1/2$. Consider the input entity $\e^\star = \langle 1, 1, \ldots, 1 \rangle$
  (i.e. all variables are set to 1), thus $L(\e^\star) = 1$ (because the formula
  is monotone).  With $n$ calls to the oracle for $\shapley$, we can obtain
  $p(L)$ as follows:
  $p(L) = \E[L(\e)] = L(\e^\star) - \sum_i \shapley(\e^\star, F_i) = 1 - \sum_i
  \shapley(\e^\star, F_i)$, by \eqref{eq:shap:sum}.
\end{proof}

Thus, we cannot hope to compute the \shap-score over the product
space $\Omega_P$. Instead, we consider the empirical distribution.

\subsection{The Empirical Distribution}\label{sec:emp:space}

The {\em empirical distribution} defined by $T$ is simply $T$ itself.
In other words, the outcomes with non-zero probability are precisely
the tuples in $T$, and the probabilities are given by their
frequencies in $T$.  We denote $\Omega_E$ the empirical distribution
defined by the set $T$.  An advantage of the empirical distribution
is that it captures not only the marginal probabilities, but also the
correlations between features.  A disadvantage of the empirical
distribution is that it associates a zero probability to every unseen
entity.


{\bf Score Computation Algorithm} We start by describing how to compute the
\shap-score over the probability space $\Omega_E$. We apply
Eq.\eqref{eq:shapley:2}, and compute the conditional expectation
$E[L(\e)|\e_S = \e^\star_S]$ for each set of features
$S \subseteq \set{F_1, \ldots, F_n}$.  Each conditional expectation requires a
complete pass over the data $T$, which becomes impractical when $n$ is larger
than 20 or so.  We propose to use an optimization borrowed from the Apriori
algorithm~\cite{DBLP:conf/vldb/AgrawalS94}.  As the set $S$ increases, the set
of entities $\e \in T$ that satisfy $\e_S = \e^\star_S$ decreases, until it
becomes a singleton $\set{\e}$.  At that point, for every superset $S'$ the
conditional expected value is the same as $L(\e)$, and hence we can stop
increasing the set $S$.  In fact, we can stop even earlier: when all entities in
the set $\setof{\e}{\e_S = \e^\star_S}$ have the same outcome $L(\e)$ (either 0
or 1).  While the worst case runtime of the algorithm is still exponential in
$n$, in practice this optimization is quite effective and we were able to
compute the \shap-score for up to $n=30$ attributes.  We present the algorithm
to compute the \shap-scores in Appendix~\ref{appendix:shap:algo}.


While it is possible to compute the \resp-score in this model,
unfortunately this score is meaningless: for most entities $\e^\star$
the \resp-score will require a very large contingency set, leading to
meaningless explanations.  To see the intuition behind this, fix the
set $T$ and consider a new entity $\e^\star$, not necessarily in $T$.
For example, $T$ is our training set, while $\e^\star$ is a new,
random customer, not present in $T$.  In order to have a
non-zero \resp-score with empty contingency set, we need to find some
entity $\e \in T$ that agrees in all features, except one, with
$\e^\star$. This is very unlikely given a random choice for
$\e^\star$.  Extending this simple observation we prove:

\begin{theorem} \label{th:empirical:sparse} Let $L$ be any classifier,
  and let $T \subseteq D_1 \times \cdots \times D_n$ be a set of size
  $N$ that defines the empirical probability space $\Omega_E$.  Fix an
  integer $c \geq 0$.  Then, for a randomly chosen entity $\e^\star$,
  the probability (over the random choices of $\e^\star$) that
  $\respScore(\e^\star, F_j, \Gamma) \neq 0$, for some feature $F_j$
  and some contingency set $\Gamma$ of size $\leq c$, is
  $\leq N (1+\sum_j |D_j|)^{c+1}| / \prod_j |D_j|$.
\end{theorem}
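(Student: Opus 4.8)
The plan is to reduce the event whose probability must be bounded to a purely combinatorial one---that the random entity $\e^\star$ lies within Hamming distance $c+1$ of some tuple of $T$---and then estimate that probability by a union bound together with an elementary count of a Hamming ball. Throughout I assume $\e^\star$ is drawn uniformly from the product domain $D_1 \times \cdots \times D_n$; this is what makes $\prod_j |D_j|$ appear in the denominator of the claimed bound.

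\emph{Step 1: unwinding the \resp-score.} Suppose $\respScore(\e^\star, F_j, \Gamma) \neq 0$ for some feature $F_j$ and some $\Gamma$ with $|\Gamma| \le c$, where $\respScore(\e^\star, F_j, \Gamma)$ is read as $\max_{\mathbf w} \respScore(\e^\star, F_j, \Gamma, \mathbf w)$. Then there is a choice of values $\mathbf w$ for the features of $\Gamma$ with $\respScore(\e^\star, F_j, \Gamma, \mathbf w) \neq 0$. Writing $\e' \defeq \e^\star[\Gamma := \mathbf w]$, the numerator of this score is $L(\e') - \E[L(\e'') \mid \e''_{\mc{F} - \set{F_j}} = \e'_{\mc{F} - \set{F_j}}]$, and the conditional expectation is over $\e'' \sim \Omega_E$, i.e.\ over the tuples of $T$. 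For the score to be nonzero the conditional expectation must be well defined, which forces the conditioning event to be nonempty: there is a tuple $\e'' \in T$ with $\e''_{\mc{F} - \set{F_j}} = \e'_{\mc{F} - \set{F_j}}$, that is, $\e''$ agrees with $\e'$ on every feature other than $F_j$. Since $\e'$ differs from $\e^\star$ only on features in $\Gamma$, the tuple $\e''$ differs from $\e^\star$ only on features in $\Gamma \cup \set{F_j}$, a set of size at most $c+1$. Hence the event in the theorem implies: some $\e'' \in T$ has Hamming distance at most $c+1$ from $\e^\star$.

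\emph{Step 2: union bound and counting.} By the union bound over the $N$ tuples of $T$, the probability of the implied event is at most $N$ times the probability that a uniformly random $\e^\star \in D_1 \times \cdots \times D_n$ lies within Hamming distance $c+1$ of a fixed tuple $\e_0$. The number of entities at distance at most $c+1$ from $\e_0$ is $\sum_{S :\, |S| \le c+1} \prod_{i \in S} (|D_i| - 1) \le \sum_{S :\, |S| \le c+1} \prod_{i \in S} |D_i|$, where $S$ ranges over the sets of at most $c+1$ coordinates on which $\e^\star$ may disagree with $\e_0$. Expanding $\bigl(1 + \sum_j |D_j|\bigr)^{c+1}$ shows that every monomial $\prod_{i \in S} |D_i|$ with $|S| \le c+1$ occurs in it with a positive integer coefficient, so $\sum_{S :\, |S| \le c+1} \prod_{i \in S} |D_i| \le \bigl(1 + \sum_j |D_j|\bigr)^{c+1}$. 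Dividing by $|D_1 \times \cdots \times D_n| = \prod_j |D_j|$ and multiplying by $N$ gives $N \bigl(1 + \sum_j |D_j|\bigr)^{c+1} / \prod_j |D_j|$.

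\emph{Main obstacle.} The only delicate point is Step~1: making precise that a nonzero \resp-score forces a witness tuple in $T$. This rests on the convention (forced by the use of $\Omega_E$) that a conditional expectation over $\Omega_E$ is defined only when the conditioning event has positive probability, and that an undefined score does not count as ``$\neq 0$''. I would state this convention explicitly at the outset; after that the argument is just a union bound and a one-line binomial estimate. (The requirement $L(\e^\star) = L(\e')$ built into the definition only shrinks the set of admissible $\mathbf w$, so dropping it can only weaken the upper bound and is therefore harmless here.)
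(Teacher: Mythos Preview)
Your proof is correct and follows essentially the same route as the paper: reduce a nonzero \resp-score to the existence of some $\e'' \in T$ within Hamming distance $c+1$ of $\e^\star$, then bound the probability of that event by a union bound over $T$ together with the estimate $\sum_{|S|\le c+1}\prod_{i\in S}|D_i|\le(1+\sum_j|D_j|)^{c+1}$. The paper's first step extracts the witness $\e''\in T$ from the numerator being nonzero (so some $\e''$ in the conditioning set has $L(\e'')\neq L(\e')$) rather than from mere well-definedness of the conditional expectation, but both extractions yield the same combinatorial event and the rest of the argument is identical.
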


\begin{proof}
  If $\respScore(\e^\star, F_j, \Gamma) \neq 0$ for some feature $F_j$
  and some contingency $\Gamma$, then there must exists two entities
  $\e', \e'' \in T$ such that (a) $\e^\star$ and $\e'$ agree on all
  features except $\Gamma$, (b) $\e'$ and $\e''$ agree on all features
  except $F_j$, and (c) $L(\e^\star)=L(\e') \neq L(\e'')$.  In
  particular, (a) and (b) imply that $\e^\star$ satisfies the
  following property:
  \begin{align}
    \exists \e'' \in T: \mbox{ $\e^\star$ and $\e''$ agree on all but $c+1$ features}
 \label{eq:prob:counter}
  \end{align}
  We claim that its probability is
  $\leq N (1+\sum_j |D_j|)^{c+1}| / \prod_j |D_j|$, which proves the
  theorem.  To prove the claim, start by fixing an entity
  $\e'' \in T$.  Consider a set of features $(F_j)_{j \in J}$, for
  some set $J \subseteq [n]$: the probability that a randomly chosen
  entity $\e^\star$ agrees with $\e''$ on all features except those in
  $J$ is
  $1 / \prod_{k \in [n]-J} |D_k| = \prod_{j \in J} |D_j| / \prod_{k
    \in [n]} |D_k|$.  By the union bound, the probability that a
  randomly chosen $\e^\star$ agrees with $\e''$ on {\em any} set of
  features $J$, with size $|J| \leq c$, is:
  $\leq \sum_{J: |J| \leq c} \prod_{j \in J} |D_j| / \prod_{k \in [n]}
  |D_k| \leq (1+\sum_j |D_j|)^{c+1}| / \prod_j |D_j|$.  Finally, the
  claim follows from the union bound applied to $\e'' \in T$.
\end{proof}

In essence, the theorem says we are very unlikely to find good
\resp-explanations using the empirical space.  For example, assume $n=30$
features, each with a domain of size $|D_i|=10$, and a test data $T$ with
$N=10000$ entities.  We are interested in the probability that a randomly chosen
$\e^\star$ has non-zero \resp-score with a contingency set of size $\leq c$.  By
the theorem, this probability is
$\leq 10000 \cdot (1+ 300)^{c+1}/ 10^{30} \approx 300^{c+1} \cdot 10^{-26}$.
When $c=8$, this quantity is $2\cdot 10^{-5}$.  It is {\em very} unlikely that a
randomly chosen entity will have a explanation with a contingency of size 8 or
smaller.  On the other hand, explanations with contingency sets 8 or larger
become meaningless.  In other words, the empirical distribution is not practical
for computing the \resp-score.

\vspace*{-1em}
\subsection{Discussion}

In this paper we will use the marginal probability space $\Omega_P$
when computing the \resp-score, and will use the empirical probability
space $\Omega_E$ when computing the \shap-score.  We are forced with
this choice by our two results above.  One of the main take-aways of
this paper is that future work on explanation needs to explore more
sophisticated choices for the probability space $\Omega$.


\begin{table*}[t]
  \begin{center}
      \begin{scriptsize}
        \begin{tabular}{|l|l||c|c|c|c|c|c|}\hline
           \multirow{2}{*}{Feature Name}         &  \multirow{2}{*}{Subscale Name }                                   &  \multirow{2}{*}{Entity}  & Feature                 & Subscale            & Subscale         & Subscale                                 & Global              \\ 
                                     &                                     &  &  Score                &  Risk           &  Weight        &  Score                                &  Risk              \\ \hline\hline
  {\color{DarkGreen}\bf ExternalRiskEstimate}     & {\color{DarkGreen}\bf ExternalRiskEstimate}         & 61            & {\color{DarkGreen}\bf 2.9896} & 0.8262                  & 1.566                  & {\color{DarkGreen}\bf 1.2934}                  & \multirow{22}{*}{0.6146} \\ \cline{1-7}
  MSinceOldestTradeOpen                          & \multirow{3}{*}{TradeOpenTime}                     & 198           & 0.2453                       & \multirow{3}{*}{0.4690} & \multirow{3}{*}{2.527} & \multirow{3}{*}{1.1842}                       &                          \\
  MSinceMostRecentTradeOpen                      &                                                    & 14            & 0.0311                       &                         &                        &                                               &                          \\
 AverageMInFile                                  &                                                    & 96            & 0.2960                       &                         &                        &                                               &                          \\ \cline{1-7}
   NumSatisfactoryTrades                         & NumSatisfactoryTrades                              & 25            & 0.0001                       & 0.4513                  & 2.156                  & 0.9729                                        &                          \\ \cline{1-7}
  NumTrades60Ever2DerogPubRec                    & \multirow{4}{*}{TradeFrequency}                    & 0             & 0.0003                       & \multirow{4}{*}{0.4425} & \multirow{4}{*}{0.359} & \multirow{4}{*}{0.1588 }                      &                          \\
  NumTrades90Ever2DerogPubRec                    &                                                    & 0             & 0.1515                       &                         &                        &                                               &                          \\
  NumTotalTrades                                 &                                                    & 27            & 0.2653                       &                         &                        &                                               &                          \\
 NumTradesOpeninLast12M                          &                                                    & 0             & 0.0000                       &                         &                        &                                               &                          \\ \cline{1-7}
 {\color{DarkGreen}\bf PercentTradesNeverDelq}    & \multirow{4}{*}{{\color{DarkGreen}\bf Delinquency}} & 89            & {\color{DarkGreen}\bf 0.5686} & \multirow{4}{*}{0.6847} & \multirow{4}{*}{2.545} & \multirow{4}{*}{{\color{DarkGreen}\bf 1.7425}} &                          \\
  MSinceMostRecentDelq                           &                                                    & 1             & 0.4015                       &                         &                        &                                               &                          \\
 {\color{DarkGreen}\bf MaxDelq2PublicRecLast12M } &                                                    & 4             & {\color{DarkGreen}\bf 1.0046} &                         &                        &                                               &                          \\
 MaxDelqEver                                     &                                                    & 6             & 0.0000                       &                         &                        &                                               &                          \\ \cline{1-7}
 PercentInstallTrades                            & \multirow{3}{*}{ Installment }                     & 11            & 0.0009                       & \multirow{3}{*}{0.5273} & \multirow{3}{*}{0.913} & \multirow{3}{*}{0.4817}                       &                          \\
 NetFractionInstallBurden                        &                                                    & 75            & 0.3706                       &                         &                        &                                               &                          \\
NumInstallTradesWBalance                         &                                                    & 2             & 1.4898                       &                         &                        &                                               &                          \\ \cline{1-7}
MSinceMostRecentInqexcl7days                     & \multirow{3}{*}{Inquiry}                           & 11            & 0.8318                       & \multirow{3}{*}{0.3172} & \multirow{3}{*}{3.004} & \multirow{3}{*}{0.9529}                       &                          \\
NumInqLast6M                                     &                                                    & 0             & 0.0002                       &                         &                        &                                               &                          \\
NumInqLast6Mexcl7days                            &                                                    & 0             & 0.0000                       &                         &                        &                                               &                          \\ \cline{1-7}
NetFractionRevolvingBurden                       & \multirow{2}{*}{RevolvingBalance}                  & 67            & 1.3938                       & \multirow{2}{*}{0.6500} & \multirow{2}{*}{1.924} & \multirow{2}{*}{1.2505}                       &                          \\
NumRevolvingTradesWBalance                       &                                                    & 7             & 0.1176                       &                         &                        &                                               &                          \\ \cline{1-7}
NumBank2NatlTradesWHighUtilization               & Utilization                                        & 2             & 0.8562                       & 0.6490                  & 0.987                  & 0.6406                                        &                          \\ \cline{1-7}
 PercentTradesWBalance                           & TradeWBalance                                      & 75            & 0.4528                       & 0.6113                  & 0.296                  & 0.1808                                        &                          \\ \hline
\end{tabular}
\end{scriptsize}\end{center}
\caption{(left) Features and subscales of the classifier for the FICO dataset.  (right) Entity $\bm e$, and the scores, weights, and risk for both the features and subscales that are computed by the classifier for $\bm e$.}\label{tbl:fico-model}
\end{table*}

\section{Experimental Evaluation}
\label{sec:rudin}

We empirically evaluate the black-box \resp\ and \shap-explanations on a FICO
dataset, and also how they compare to a well-known white-box explanation for
FICO data.  In Appendix~\ref{appendix:kernelshap}, we also compare our scores with an
approximation of the \shap-score from~\cite{DBLP:conf/nips/LundbergL17}.



We further consider a Kaggle dataset used to detect credit card fraud, for which
the explanation scores require a bucketization of continuous features. We
evaluate how robust the scores are to these transformations. Due to space
constraints, we only preset a summary of the results for this dataset. The
details on the dataset and experiments are presented in Appendix~\ref{appendix:creditcard}.

\smallskip

\noindent{\bf Evaluation Setup.}
All experiments are performed in Python 3.7 using the Pandas library, and on an
Intel i7-4770 3.40GHz/ 64bit/32GB with Linux 3.13.0.

For the \resp-score, we restrict the size of the contingency sets to at most 1. If
an entity does not have an non-zero \resp-score for contingency sets of size 1,
we return no explanation.

\smallskip

\noindent{\bf FICO Dataset.}
We consider the dataset from the public FICO
challenge~\cite{DBLP:journals/corr/abs-1811-12615}. The objective of the
challenge is to provide explanations for credit risk assessments.

The dataset consists of 23 continuous features and 10,459 entities. The features
are shown in the left column of Table~\ref{tbl:fico-model}. The dependent
variable {\it RiskPerformance} encodes whether the applicant will make all
payments within 90 days of being due ({\em good}, 0), or will make a payment
over 90 days after due ({\em bad}, 1). We remove 588 entities from the dataset,
for which all values are missing (indicated by the value -9). This is because we
cannot provide explanations for entities for which we have no information.

We separate the input dataset into training and test data. The test dataset is a
random sample of 1,975 entities (20\% of the original dataset). We use the
training dataset to learn the prediction model, and the test dataset to evaluate
the model and to explain the predictions of the model.

As noted by the FICO community, several input features are {\em monotonically
  increasing}, i.e., the probability of the outcome being bad increases with the
feature value.

\ignore{ 

}



\vspace{-1em}
\subsection*{Experiments with FICO Dataset}

We compare the black-box \resp- and \shap-explanations with a white-box
explanation described in\ignore{by Chen et al.}~\cite{DBLP:journals/corr/abs-1811-12615}
specifically for the FICO dataset.  We denote this white-box explanation as
\fico-explanation.  In order to explain it, we need to do a
rather detailed review of the model
used~\cite{DBLP:journals/corr/abs-1811-12615}.


\noindent{\bf Classification Model.}
Our goal was to use the exact model and explanation score
in~\cite{DBLP:journals/corr/abs-1811-12615}, but, unfortunately, the paper does
not provide sufficient information to replicate the model, and the online
demonstration 
of the model does not seem to follow the description in the paper. Hence, we
re-implemented the model using our best understanding of the description in the
paper. Our implementation makes one design choice that differs from the original
model: the input features are bucketized features into disjoint ranges, as
opposed to overlapping ranges. This ensures that each entity has at most one
explanation per feature, whereas the original model can provide several
explanations for a single feature if the feature value falls into several
buckets (c.f., Table 1 in \cite{DBLP:journals/corr/abs-1811-12615}). We next
describe the model as we implemented it.

The classifier is a two-layer neural network, where each layer is defined by
logistic regression models. A logistic regression model $LR_{\bm \theta}(\bm x)$
with features $\bm x = (x_1, \ldots, x_n)$ is defined by $n+1$ weights
$\bm \theta = (\theta_0, \theta_1, \ldots, \theta_n)$, where $\theta_0$ is the
bias term of the model. For a given entity with features $\bm x$, the model
computes $p = \mathtt{sigmoid}(\theta_0 + \sum_{i \in [n]} \theta_i\cdot x_i)$, which returns a value between 0 and 1. We
refer to the outcome of $LR$ as the probability of risk. The model classifies
the entity as a ``bad'' outcome if the probability of risk is above 0.5.

The model requires that the continuous input features are bucketized and one-hot
encoded.  We use exactly the same buckets
as~\cite{DBLP:journals/corr/abs-1811-12615}, and describe them in
Appendix~\ref{appendix:ex:buckets}. One-hot encoding turns the bucketized
feature values into indicator vectors, with one entry for each bucket, which is
$1$ if the value is in this bucket, and $0$ otherwise.  After bucketization and
one-hot encoding, the 23 {\em input features} become 165 {\em binary features},
which are input to the model.


Next, the binary features are categorized into 10 disjoint groups,
each group consisting of all binary features derived from 1-4 input
features, called {\em subscales}, as shown in
Table~\ref{tbl:fico-model}. Features within a group describe similar
or related aspects of an applicant. For example,
\textit{MScienceOldestTradeOpen}, \textit{MScinceMostRecentTradeOpen},
and \textit{AverageMInFile} are categorized into a single
subscale, \textit{TradeOpenTime}, because they are all related to the
number of months that a trade is open.

The first layer of the classifier consists of one logistic regression model for
each subscale. The model for subscale $S$ returns the probability of risk
associated with the features in $S$. 
The second layer is defined by a single logistic regression model, whose inputs
are the subscale risk predictions of the models in the first layer, and whose
output is the risk prediction for the entity. For each feature $F_i$ (subscale
$S$), we store the \emph{feature score} (\emph{subscale score}), which is a dot
product of the binary vector for $F_i$ (subscale risk of $S$) and the
corresponding parameters.

The models are trained in the R library {\it glmnet}, and use
monotonicity constraints to model the monotonicity of the input features
The classification model has an ROC-AUC score of 0.812 and classifies 1020
entities as a high risk entities (label 1). We focus on explaining these
`bad' outcomes.

Table~\ref{tbl:fico-model} depicts, for illustration of the
methodology, the nested structure of the model for one particular
entity $\bm e$ in the dataset.  For each subscale $S$, the LR model is
applied to the binary features of that subscale.  For example, the
first feature In Table~\ref{tbl:fico-model},
\textit{ExternalRiskEstimate}, has value 61, and falls in the first of
the 8 buckets, hence the {\em
  Subscale risk} of this subscale is
$\mathtt{sigmoid}(\theta_0 + \theta_1 \cdot 1 + \theta_2 \cdot 0 +
\cdots \theta_8 \cdot 0)$, which is $0.8262$ in our example.  For
another example, the subscale risk of the \textit{Delinquency}
subscale is $0.6847$.
%
Next, all subscale risks become the input to the logistic regression model at
the second layer: they are multiplied by the weight of the 2nd layer LR, shown
in the column {\em Subscale Weight} in the Table, added up and passed through
the sigmoid function.  This results in the final risk probability of the entity
$\bm e$, which in our example is $0.6146$. Since the risk is $>$0.5, the model
classifies $\e$ as high risk, i.e., $L(\e)=1$.

\begin{figure*}[t]
  \centering
\centerline{\hspace{2cm} \fico-explanation \hfill \resp-explanation \hfill \shap-explanation \hspace{2cm}}
  \includegraphics[width=.32\textwidth]{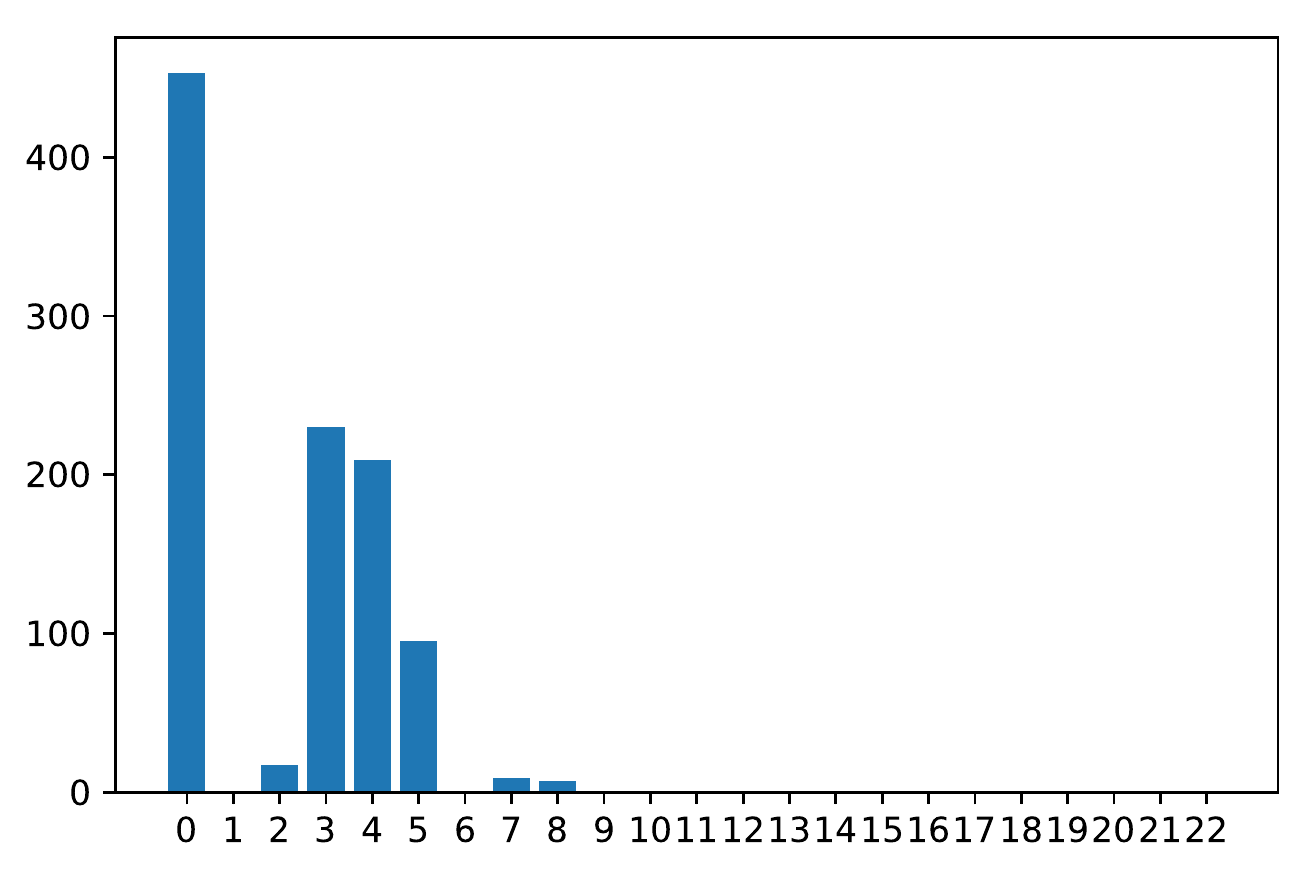}
  \includegraphics[width=.32\textwidth]{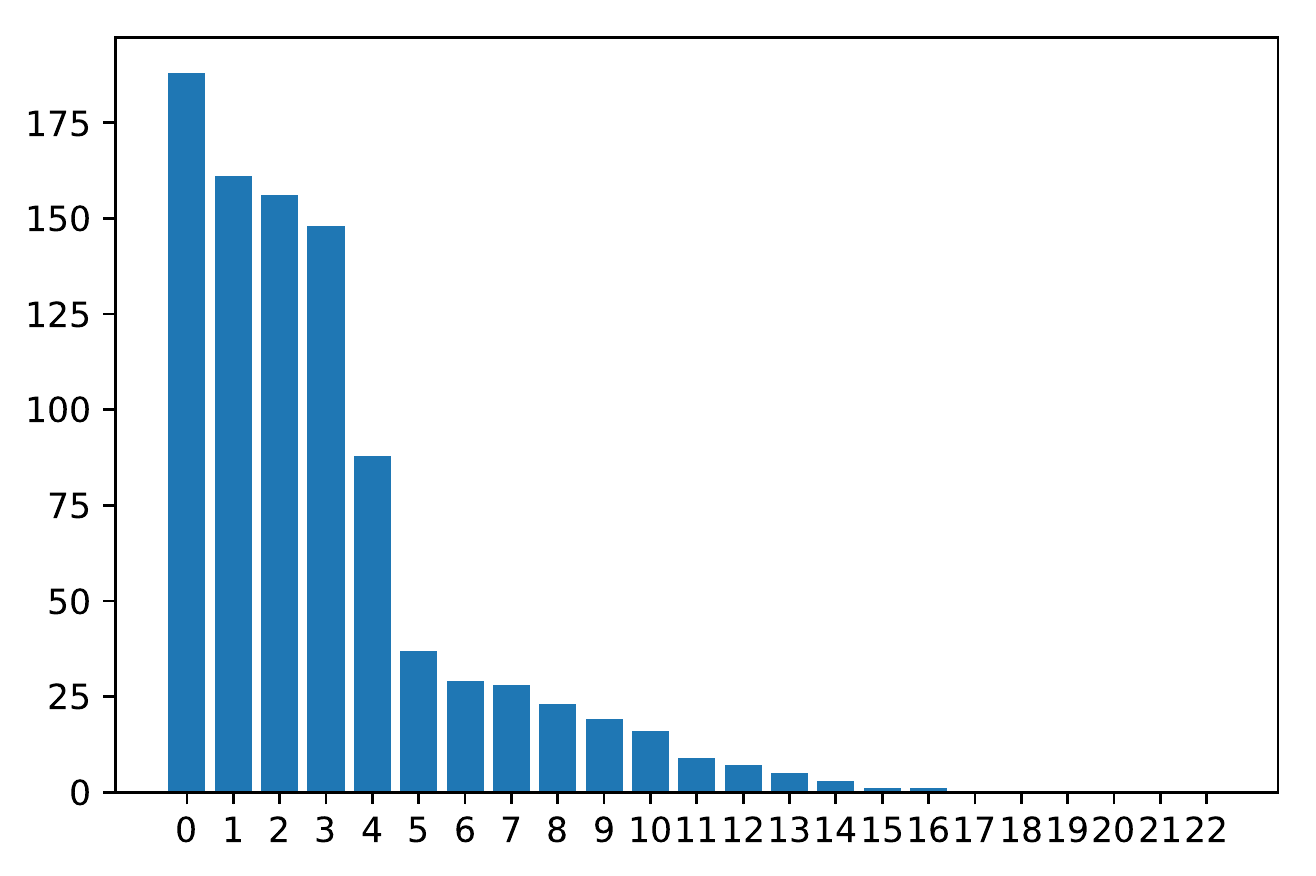}
  \includegraphics[width=.32\textwidth]{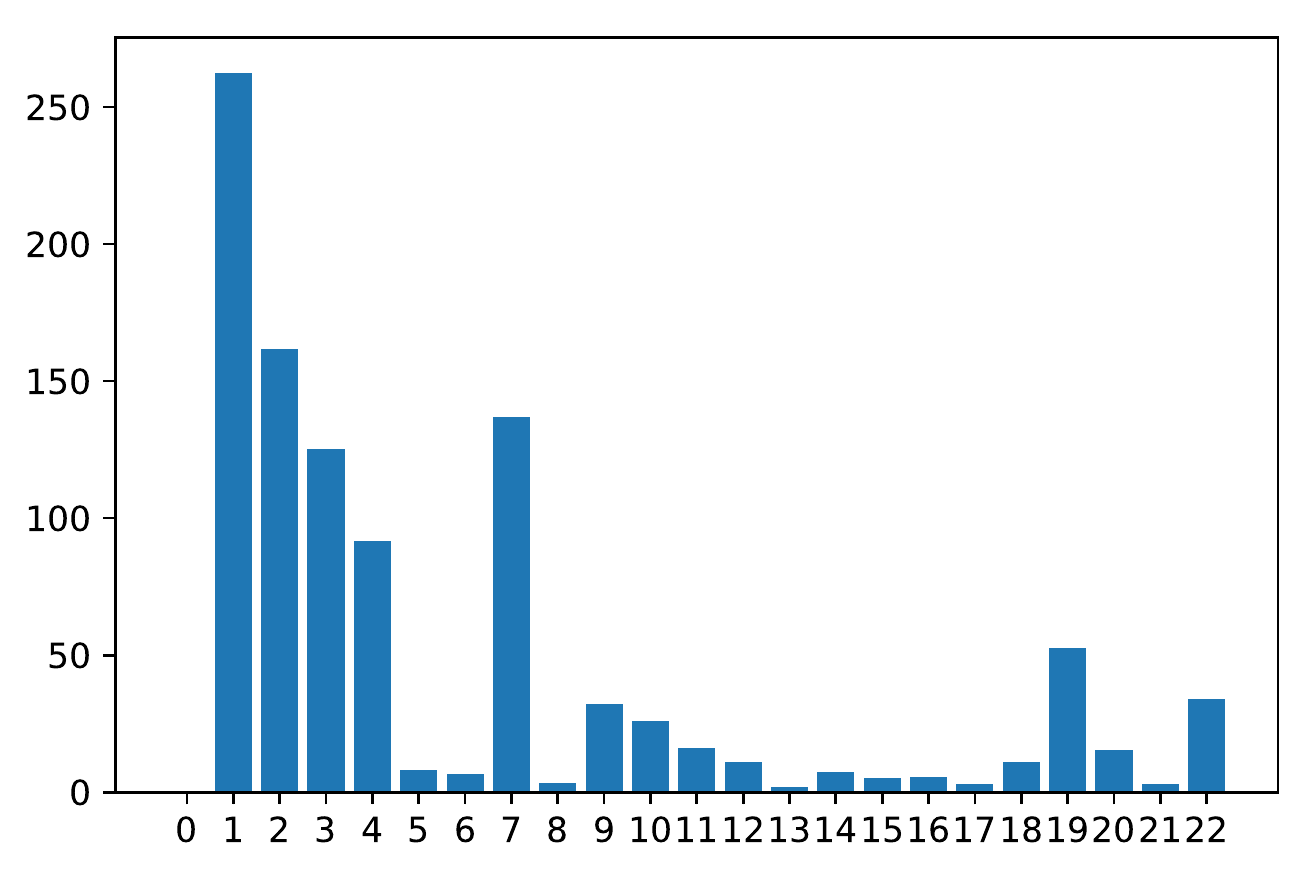}
  \begin{scriptsize}\hspace*{-1em}
  \begin{tabular}{l}
    \emph{Key:} 0 = MSinceMostRecentInqexcl7days; 1 = ExternalRiskEstimate; 2 = NetFractionRevolvingBurden; 3 = PercentTradesNeverDelq; \\
     4 = AverageMInFile; 5 = MaxDelq2PublicRecLast12M; 6 = NumInqLast6M; 7 = MSinceOldestTradeOpen; 8 = NumSatisfactoryTrades;  \\
    9 = NumBank2NatlTradesWHighUtilization; 10 = PercentInstallTrades; 11 = NumTrades60Ever2DerogPubRec;  \\
    12 = NumRevolvingTradesWBalance;
     13 = MaxDelqEver;
    14 = NumTradesOpeninLast12M;  15 = NumTotalTrades; \\
    16 = NumInqLast6Mexcl7days; 17 = MSinceMostRecentTradeOpen; 
    18 = NumTrades90Ever2DerogPubRec; 19 = MSinceMostRecentDelq; \\
    20 = NetFractionInstallBurden; 21 = NumInstallTradesWBalance; 
    22 = PercentTradesWBalance; 
  \end{tabular}
  \end{scriptsize}
  \vspace*{-1em}
  \caption{Distribution of the top ranked features for the fico-, resp-, and
    shap-explanations. Each bar represents for how many of the 1020 entities
    that feature was the top explanation.  To facilitate comparison, all three
    bar charts list the features in the same order, which is the decreasing
    order for the \resp-explanation (middle chart).}
    \vspace*{-1em}
  \label{fig:top1-scores}  
\end{figure*}

\noindent{\bf\fico-explanation.}
We next describe the \fico-explanation
from~\cite{DBLP:journals/corr/abs-1811-12615}. For a given entity $\bm e$, the
\fico-explanation is computed in four steps:
\begin{enumerate}[leftmargin=0.5cm]
\item Run the model on $\bm e$ to compute the subscale and final
  risks, and store the scores for each feature and subscale.
\item Rank the subscales in decreasing order of the subscale scores
  obtained in Step 1, and keep the top two subscales.
\item For each of the top two subscales, rank its input features in
  decreasing order of their feature scores, and keep only the top two 
  (or all features if there are less than two).
\item Concatenate the top two features for each of the top two subscales,
  sorting first by the subscale score, then by the feature score.  This is the
  final \fico-explanation ranking. 
\end{enumerate}

For the example in Table~\ref{tbl:fico-model}, the top subscales and features
are shown in green. The top-2 subscales are \textit{Delinquency} and
\textit{ExternalRiskEstimate}, and the top-2 features in \textit{Delinquency} are
\textit{PercentTradesNeverDelq} and \textit{MaxDelq2PublicRecLast12M}.  Thus,
the final \fico-explanation order is: \textit{PercentTradesNeverDelq},
\textit{MaxDelq2\-PublicRecLast12M}, \textit{ExternalRiskEstimate}.

\ignore{
}

\begin{figure*}[t]
  \centering
  \includegraphics[width=.25\textwidth]{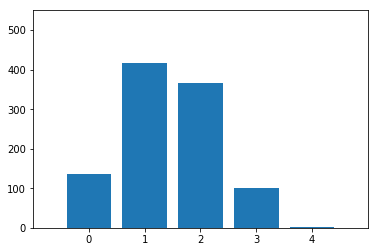}\hspace{3em}
  \includegraphics[width=.25\textwidth]{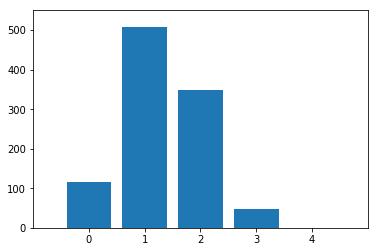}\hspace{3em}
  \includegraphics[width=.25\textwidth]{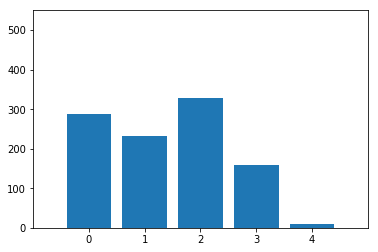}
  \vspace*{-1em}

  \caption{Distribution of the intersection size for top-4
    explanations for the (left) \fico\ and \resp-explanations,
    (middle) \fico\ and \shap-explanations, and (right) \resp\ and
    \shap-explanations.}
  \label{fig:top4-intersection}
\end{figure*}

\noindent{\bf Experimental Result.}
To compare the \fico, \resp, and \shap-explanations, we compare (1)
the distribution of the top explanations for each score, and (2) the
average similarity of the top-4 explanations for each entity between
all pairwise combinations of the scores.  Recall that both \resp- and
\shap-scores are black-box models: although we used the same
classification model as for the \fico-explanation, they only use the
outputs of the model.

We report on the explanations for the 1020 entities for which the
classifier predicts 1, i.e., the `bad' outcome. We use $M = N = 2$ in
Steps 2 and 4 of the \fico-explanations. The \resp-score fails to
provide explanations for 101 entities, due to the restriction on the
size of the contingency set.

\ignore{


}

Fig.~\ref{fig:top1-scores} presents the distribution of the top
explanations returned by the \fico, \resp, and \shap-explanations.

The first observation is a reasonable correlation between the
\fico-explanation (white-box) and the \resp-explanation (black-box).
Their most frequent top feature is the same,
\textit{MSinceMostRecentInqexcl7days}, and there are a few other
features that are popular top explanations for both scores.  

However, there is an obvious difference with \textit{ExternalRiskEstimate},
which is the second most popular explanation for \resp, yet is never the top
\fico-explanation.  To understand it, we looked deeper into the data, and
illustrate our findings with the example that we show in
Table~\ref{tbl:fico-model}.  As we saw, the top-2 subscales are
\textit{Delinquency} and \textit{ExternalRiskEstimate} in this order(!), hence
\textit{ExternalRiskEstimate} will not be the top explanation, instead it will
be preceded by the two features in the \textit{Delinquency} subscale.  Thus,
although the feature-score of \textit{ExternalRiskEstimate} is the highest, it
ranks lower only because of the specific way the \fico-explanation is ranked,
namely it is ranked first by the sub-scale score, and then by the feature score.
This could be adjusted by tweaking the way one computes the ranking.  However,
there is a deeper reason why the \fico-explanation fails to report
\textit{ExternalRiskEstimate} as top explanation: it is because the
\fico-explanation is based only on the {\em current} entity $\e$, and ignores
other entities in the population.  In contrast, the \resp-score is based on
causality and considers the {\em entire population} of
entities. \textit{ExternalRiskEstimate} is a counterfactual cause of our entity,
because by changing its value from $61$ to $81$ the outcome changes from
$L(\e)=1$ to $L(\e)=0$, while neither \textit{PercentTradesNeverDelq} nor
\textit{MaxDelq2PublicRecLast12M} are causal.  The reason why the first is
counterfactual while the others are not lies in the weights $\theta_i$
associated to the buckets of these features.  The values $61$ to $81$ lie in
buckets 1 and 5 of \textit{ExternalRiskEstimate}, respectively, and their
weights vary significantly: $\theta_1 = 2.9896$ and $\theta_5 = 0$. In contrast,
the buckets of \textit{PercentTradesNeverDelq} and
\textit{MaxDelq2PublicRecLast12M} have weights in a small range $[1.5,1.9]$, and
changing their values is insufficient to change the outcome.  The
\fico-explanation looks only at the current bucket, and fails to notice that
other buckets have significantly different values.  In contrast, the \resp-score
considers the entire model because it examines the entire population, checking
for a counterfactual feature.


In general, we observe that the top \fico-explanations are correlated
to the weights in the second layer of the classifier. In fact, the
four most common top explanations come from the three subscales with
the highest weight (their weights are at least 2.5,
c.f. Table~\ref{tbl:fico-model}). Thus, they are more likely to have a
high subscale score, and consequently, to be among the top subscales
in Step 2 of the \fico-explanation.  This also explains why the
\fico-explanation is less diverse than the \resp-explanation: it tends
to choose features from the same three subscales.  We argue that a
good diversity is a desired quality of an explanation score: we want
to be able to give individualized explanations to the customers, and,
assuming all features are relevant to the outcome, we expect a diverse
distribution of the top explanations. Of the three graphs in
Fig.~\ref{fig:top1-scores}, the \resp-explanation has clearly the most
diversity.

Next, we compared the \resp-explanation to the \shap-explanation, and
notice that they are rather distinct.  To understand the source of the
difference, we focused on the fact that \shap-score never returns
\textit{MSinceMostRecentInqexcl7days} as the top-explanation, which is
the most frequent top-explanation for the other two scores.  Recall
that the \shap-score is the sum of $n-1$ levels, see
Eq.~\eqref{eq:shapley:2}, and the weights of the levels is an inverse
binomial term, $\frac{\ell!(n-\ell-1)!}{n!}$; thus, most of the mass
of the score consists of the first levels $\ell = 0, 1, \ldots$ and
the last levels $\ell = n-1, n-2, \ldots$, while the weights
of the middle levels decrease very fast (exponentially).  After
examining the data, we found that, for each value of
\textit{MSinceMostRecentInqexcl7days}, the distribution of 0
and 1 outcomes in the test dataset are fairly even. As a
result, the first layer of the \shap-score ($\ell = 0$) is always
close to zero.  On the other hand, recall from
Sec.~\ref{sec:prob:space} that for the \shap-score we are forced to
use the empirical probability distribution, and, as we argued there,
the contribution of the higher levels is zero.  This explains why
\textit{MSinceMostRecentInqexcl7days} has a very low \shap-score. We
believe that this is an artifact of the empirical distribution
that underlies the computation of the \shap-score. We conjecture
that this phenomenon would not occur if the \shap-score was computed
over more sophisticated probability spaces.

Finally, we compare the top-4 explanations for each pairwise
combination of the three scores. We assume the top-4 explanations are
sets, and ignore their ranking and scores. We compute two statistics on
each entity: (1) the size of the intersection, and (2) the Jaccard
coefficient for set similarity.

Fig.~\ref{fig:top4-intersection} depicts the distribution of the
intersection size of the top-4 explanations. For the \fico- and
\resp-explanations, we observe that 86.8\% of the entities share a
common explanation. The remaining cases are mostly entities for which
the \resp-score does not provide any explanation, due to the
restriction on the contingency set. For the \fico- and
\shap-explanations, 88.5\% of the entities share a common explanation.
For both comparisons, the number of common explanations is usually
less than three. This is not surprising, because the top-4
\fico-explanations must come from the top-2 subscales, whereas the
explanations for the \resp\ and \shap-scores tend to be more diverse,
by allowing for explanations from different subscales.

The average Jaccard similarity coefficient for each of the pairwise score
comparisons are: (1) 0.276 for the \fico\ and \resp, (2) 0.213 for the \fico\
and \shap, and (3) 0.263 for the \resp\ and \shap. The Jaccard coefficients
underline our observation from Fig.~\ref{fig:top1-scores}: On average the
\fico-explanation is more similar to the \resp- than the \shap-explanation.


Overall, there is only a limited overlap of the top-4 explanations for the three
scores. We believe that this is due to differences in the explanation
methodologies, and the underlying probability space for the \resp- and
\shap-scores.

\subsection*{Experiments with Credit Card Fraud Dataset}

We summarize our findings for the evaluation of the \resp- and \shap-scores on
the Kaggle Credit Card Fraud dataset. The details on the evaluation are provided
in Appendix~\ref{appendix:creditcard}.

All features in this dataset are high-precision continuous variables. The \resp-
and \shap-scores require that such features are bucketized into equi-depth
buckets.

We evaluate how sensitive the two scores are to the number of buckets. We
observe that the \resp-score is robust to the bucketization of the features:
varying the number of buckets has a limited effect on the top
\resp-explanations. For the \shap-scores, however, there is a tradeoff: as the
number of buckets increase, the number of entities after conditioning on any
feature decreases. As a result, the \shap-score fails to give an explanation for
large number of buckets.

As for the FICO dataset, the two explanation scores with the same bucketization
have limited overlap: 85\% of the top-4 explanations have an overlap of at least
two features, yet the Jaccard similarity coefficient is 0.22.

\ignore{
  
\begin{figure*}[t]
  \centering
  \includegraphics[width=.495\textwidth]{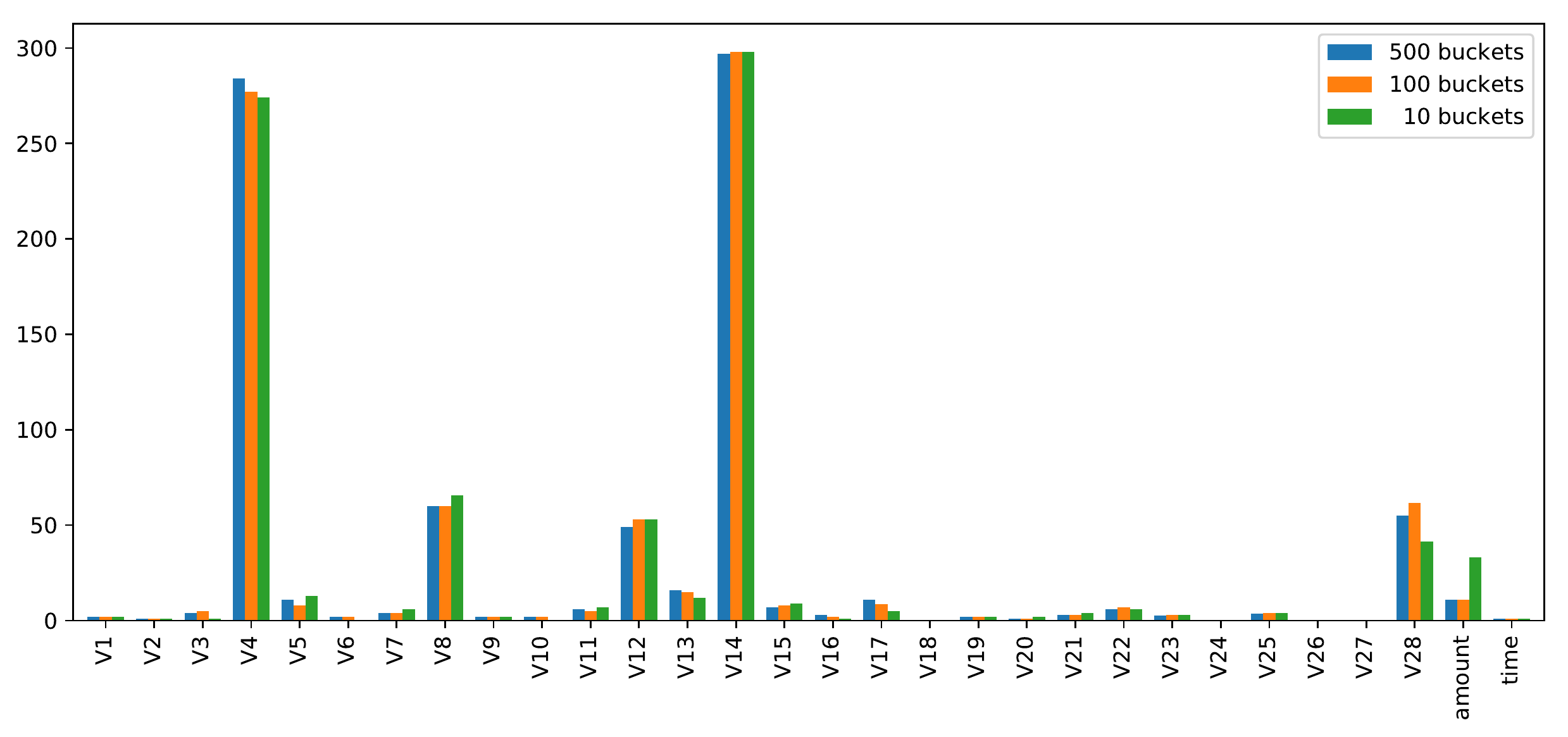}
  \includegraphics[width=.495\textwidth]{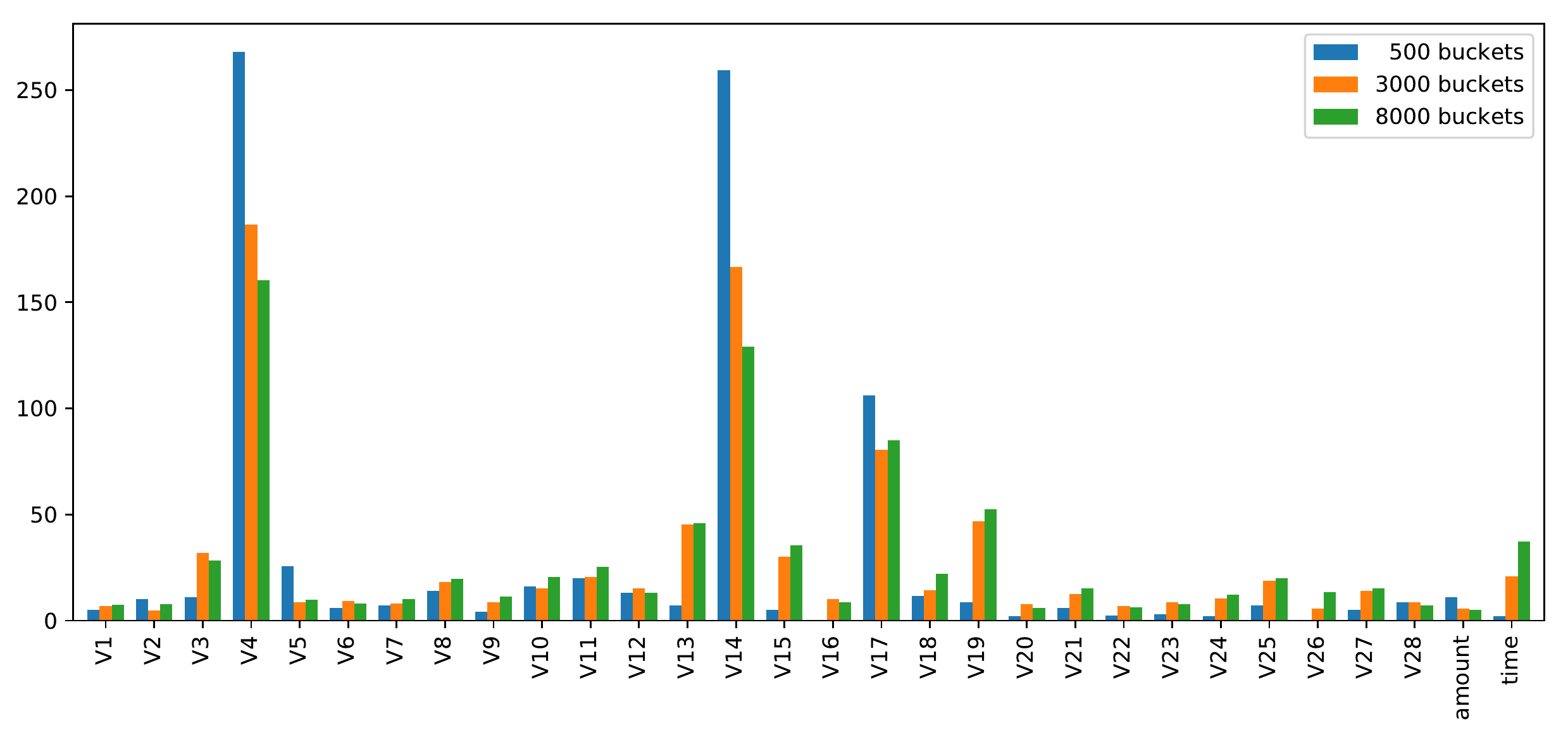}

  \vspace*{-1em}

  \caption{Distribution of the top ranked features for the (left) resp-score,
    and (right) shap-score where the features are bucketized into varying number
    of equi-depth buckets.}
  \label{fig:top1-creditcard}
\end{figure*}

\subsection{Experiments with  the Kaggle Credit Card Fraud Dataset}
\label{sec:creditcarddata}


We next present our results for the evaluation of the black-box \resp\ and
\shap-scores on the Kaggle Credit Card Fraud dataset.

{\bf Classification Model.} We use as classifier a logistic regression
model. Since the dependent variable in the dataset is highly
unbalanced, we trained the classifier over an undersample of the
training data, which has an equal distribution of positive and
negative outcomes.\footnote{See
  \url{https://www.kaggle.com/janiobachmann/credit-fraud-dealing-with-imbalanced-datasets}}
The classifier labels 846 entities in the original test dataset as
fraudulent, which means it misclassifies less than 3\% of the
entities. This results in an ROC-AUC score of 0.95 on the test
dataset.

{\bf Score Computation.}  All features in this dataset are high-precision
continuous variables, which leads to complications in the computation of the
\resp- and \shap-scores. For the \shap-score, the issue is that conditioning on
one feature returns a single entity, and, thus, no representative distribution
to compute the expected value over the outcome. For the \resp-score, the feature
domains are so large that the score becomes very expensive to compute,
especially for non-empty contingency sets.


To mitigate this issue, we bucketize the features into varying numbers of
equi-depth buckets. Each bucket is represented by the mean over the values that
fall into this bucket. Note that, there is a tradeoff between the number of
buckets and the time it takes to compute the score. This tradeoff is reversed
for the two scores. For the \resp-score, as the number of buckets
increases, the computation time increases. For the \shap-score, as the number of
buckets decreases, the computation time increases. We evaluate how sensitive the
two scores are to the number of buckets.

{\bf Evaluation Results.} Fig.~\ref{fig:top1-creditcard} presents the
distribution of the top explanations for the two scores and varying number of
buckets. For the case of 500 buckets, both scores frequently report the
variables $V_4$ and $V_{14}$ as top explanations. Beside these two features,
however, the distributions are not very comparable. For instance, the third most
common top explanation with the \shap-score is $V_{17}$, which is rarely a top
explanations for the \resp-score.

We observe that the distribution of the \resp-score is very consistent when the
number of buckets ranges between 10 and 500. The number of buckets do however
effect the time it takes to compute the score. On our commodity machine, we
computed the \resp-score for 10 buckets in less than 4 minutes (roughly 0.3
seconds per entity), while for 500 buckets it took around 11 hours. This is
mostly due to the computation for contingency sets of size 1, which is quadratic
in the number of features and domain sizes.  For the \shap-score, however, as
the number of buckets increases, the distribution over the features becomes more
uniform. This is because as we increase the number of buckets there are, the
number of entities after conditioning any feature decreases. The time to compute
the \shap-score ranged from 17 minutes for 8000 buckets to over 16 hours for 500
buckets. We attempted to compute the \shap-score for 100 buckets, but the
experiment took longer than the pre-defined timeout of 24 hours.  We conclude
that the \resp-score is robust to the bucketization of the features, whereas for
\shap-scores there is a tradeoff between the number of buckets and the ability
to attain good explanations.

We further compare the top-4 explanation sets for each entity for both scores
and 500 buckets. The majority of the entites (84\%) share one or two
explanations in the top-4 explanation sets of the two scores, while 10.5\% share
no explanation. Remarkably, for 75.4\% of the entities, the top-explanation for
the \resp-score is also one of the top-4 explanations for the \shap-score. On
the other hand, the top explanation in the \shap-score is one of the top-4
\resp-score explanations for only 67.4\% of the entities. Therefore, we conclude
that there is some overlap in the scores, but there is only a limited overlap
across the top-4 explanations.


}


\section{Discussion} 

In this paper we introduced a simple notion of explanation, \resp, for
a classification outcome, which is grounded in the notion of
causality.  We have compared \resp\ to the \shap-explanation and to a
white-box explanation for a specific classification problem.  While no
benchmarks exists for
explanations~\cite{DBLP:journals/corr/Doshi-VelezK17,DBLP:journals/cacm/Lipton18},
our empirical evaluation suggests that \resp\ can provide similar or
better quality.  Our initial goal of this project was to evaluate the
\shap-score, but we soon ran into difficulties due to its high
computational complexity. Lundberg et al.~\cite{DBLP:conf/nips/LundbergL17} claim a polynomial-time implementation, but it is restricted to decision trees. 
In contrast, we have shown that its complexity is
\#P-hard for the product space.  Because of these difficulties, we proposed the
\resp-explanation, based on the simple concept of {\em counterfactual
  cause}, and found it quite natural to interpret its results on real
data.  On the other hand, our experiments have limited the
\shap-explanation to the empirical distribution.  Future work is
needed to evaluate \shap\ on richer probability spaces.

\ignore{++++++
\section{Discussion}

\comlb{I am putting here some remarks we have in the longer doc. Unsystematic for the moment. Do not read.}

In \cite{raiExpl} we have also considered the {\em Shapley-score}. It uses the well-known Shapley value introduced and commonly used to measure the contribution to common wealth of individual players of a coalition game \cite{shapley:book1952,roth1988shapley}. It has been used in knowledge representation \cite{hunter} and data management \cite{icdt20}, and Lundberg et al. \cite{DBLP:conf/nips/LundbergL17,DBLP:journals/corr/abs-1905-04610} have pioneered its use in machine learning. We proceed along the lines of \cite{DBLP:conf/nips/LundbergL17,DBLP:journals/corr/abs-1905-04610}. Counterfactual changes are built into the general definition of the Shapley value. \ We also consider {\em the Banzhaf-score}, that is based on the {\em Banzhaf Power Index}  \cite{10.2307/3689345,Leech1990}, which in its turn is related to the Shapley value.

In this work we consider single feature values for possible explanations. However, sets or ranked lists of them could be used as explanations (it may be the case that no change of a single attribute value switches  the classification).

+++

Classification experiments were run using a consistent\footnote{Entities with the same feature values always have the same label.} set of test data consisting of 2092 entities. Two versions of the classification methodology were used, one is our version (or reconstruction), the other is  Rudin's version. The exact details of Rudin's model, and even more, its implementation, are unclear. In \cite{DBLP:journals/corr/abs-1811-12615} the continuous features are split in intervals, but they are not disjoint, actually they share the left end point, and the next extend the previous one. It is unclear whether they train the model with these intervals and one-hot encoding based on them, or they do it with the generated disjoint intervals. This has an effect in the way scores for feature values are defined and computed.
\ Computing the ROC AUC score on both versions resulted in the following accuracies on the basis of the test data are:  \
 Rudin: 0.779924877695102, and  Ours: 0.7782761417565034. \
 For this comparison, the weights of our classification models where replaced by those of the Rudin's model, which are provided by the demo web site.

++++}

%

\ignore{  \begin{acks}
  Our deep gratitude goes to ...
\end{acks}
}
%


\section*{Acknowledgments}

Suciu was partially supported by NSF IIS 1907997,NSF III
1703281, NSF III-1614738, and NSF AITF 1535565. We thank Guy van den Broeck for
insightful discussions on the complexity of \shap, and Berk Ustun for his
feedback and pointing out the connection to recourse.

\bibliographystyle{plain}
\bibliography{deemScores}

\appendix

\section{Computing SHAP}
\label{appendix:shap:algo}

Figure~\ref{fig:algorithm:shap} presents the algorithm to compute the
\shap-score for a given entity $\e^\star$. We assume that the score is computed
over a dataset $T$ with features $F_1, \ldots, F_n$.

The scores are computed in two steps. First, we compute the conditional
expectations $E[L[e] | \e_S = \e_S^\star]$ for all subsets
$S \subseteq \{F_1, \ldots, F_n\}$. They are computed by the {\bf expectations}
function, which enumerates all possible subsets $S$ depth-first. This traversal
allows us to apply the optimizations mentioned in Section~\ref{sec:emp:space}:
if $E[L[e] | \e_S = \e_S^\star] = L(\e^\star)$, we avoid computing the
expectation for any set $S' \supseteq S$, because they all have the same
value. Each conditional expectation can be computed with a single SQL query. For
instance, consider the set $S = \{F_1, F_4\}$ and let $\e^\star_{F_1}$ the value
for feature $F_i$ in $\e^\star$, then we can compute
$V := E[L[e] | \e_S = \e_S^\star]$ as follows:
\begin{align*}
  &\texttt{SELECT AVG(L($F_1, \ldots, F_n$)) AS V}\\
  &\texttt{FROM T WHERE $F_1 = \e^\star_{F_1}$ AND $F_4 = \e^\star_{F_4}$;}
\end{align*}

The {\bf shap} function then computes the \shap-scores for each feature. All
scores are computed in one pass over the precomputed conditional expectations
following Eq.\eqref{eq:shapley:2}.  

\begin{figure}[t]
  \begin{center}
  \begin{minipage}[t]{.5\linewidth}\small
    \begin{tabbing}
      \texttt{{\bf exp}}\=\texttt{{\bf ectations}(Entity $\bm e^\star$, Set $S$,  List $F$): }\\
      \>\texttt{compute V$:=E[L(\e)|\e_S = \e^\star_S]$ \hspace{1.5em} //(one pass over $T$)}\\
      \>\texttt{CONDEXP $:=\emptyset$} \\
      \>\texttt{if }\=\texttt{$V \neq L(\e^\star)$ then:} \\
      \>\>\texttt{for}\=\texttt{all features $f \in F$ do:} \\
      \>\>\>\texttt{CONDEXP $\cup$= {\bf expectations}($ \e^\star, S \cup \set{f}, F_{> f}$)}\\
      \>\texttt{return CONDEXP $\cup$ (S, V);}\\
      \\
      \texttt{{\bf sha}}\=\texttt{{\bf p}(Entity $\bm e^\star$): }\\
      \>\texttt{CONDEXP = \texttt{\bf expectations}($\bm e^\star$, $\emptyset$,
        [$F_1, \ldots, F_n$]);}\\
      \>\texttt{shap = [0 | $f \in \{F_1, \ldots, F_n$\}];}\\
      \>\texttt{for}\=\texttt{all pairs (S,V)$ \in {}$CONDEXP do:} \\
      \>\>\texttt{for}\=\texttt{all features $f \in \{F_1, \ldots, F_n\}$ s.t. $f \notin S$ do:} \\
      \>\>\>\texttt{if }\=\texttt{$\mathtt{S \cup \{f\}} \in {}$CONDEXP then:} \\
      \>\>\>\>\texttt{V' = CONDEXP[$S \cup \{f\}$]} \\
      \>\>\>\>\texttt{shap[f] += $\frac{|S|! (n-|S|-1)!}{n!}$ (V' - V)} \\
      \>\texttt{return shap;}
    \end{tabbing}
  \end{minipage}
\end{center} \vspace{-1em}
\caption{Computation of the \shap-score for a given entity $\e^\star$ over
    dataset $T$ with features $F_1, \ldots, F_n$.  The algorithm first computes all
    required conditional expectations $E[L[e] | \e_S = \e_S^\star]$ for sets $S \subseteq
    \{F_1, \ldots, F_n\}$, and then reuses their computation to compute the score of
    each feature.  We use $F_{> f}$ to denote the list of features that occur after
    $f$ in $F$.}
  \label{fig:algorithm:shap}
\end{figure}

\section{Bucketization for FICO dataset}
\label{appendix:ex:buckets}

We illustrate the bucketization of \textit{ExternalRiskEstimate}.  Its buckets
are $[0,63]$, $[64,70]$, $[71,75]$, $[76,80]$, $[81,\infty]$, $\set{-7}$,
$\set{-8}$, $\set{-9}$, where the last three buckets correspond to special
generic values that indicate missing, outdated, or inapplicable records.  For
the entity in Table~\ref{tbl:fico-model}, $\textit{ExternalRiskEstimate}=61$,
after one-hot encoding, this value is represented as the vector
$[1,0,0,0,0,0,0,0]$, because 61 falls into the first bucket.

\section{Comparison with KernelSHAP}
\label{appendix:kernelshap}

\begin{figure}[t]\centering
  \includegraphics[width=.5\columnwidth]{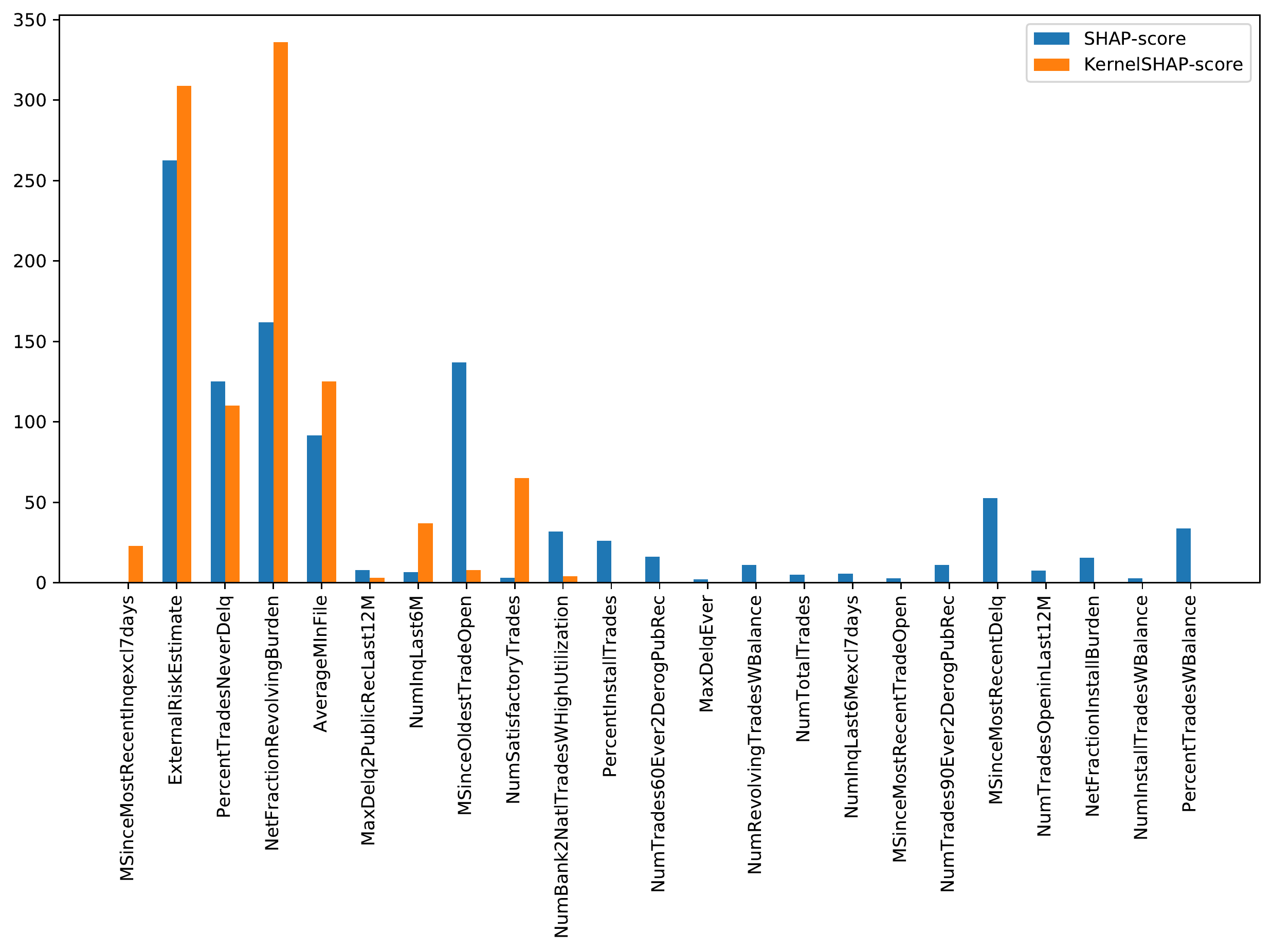}
  
  \caption{Comparison of the distribution of the top ranked features for the
    \shap-score, and the KernelSHAP approximation
    from~\cite{DBLP:conf/nips/LundbergL17}. The distribution for the \shap-score
    is identical to the one in Figure~\ref{fig:top1-scores}.}
  \label{fig:top1-kshap}
\end{figure}

In Sec.~\ref{sec:rudin} we compared the \resp-score with the exact computation
of the \shap-score using the algorithm in Fig.\ref{fig:algorithm:shap}. In this
section, we also compare the scores with the the KernelSHAP
algorithm~\cite{DBLP:conf/nips/LundbergL17}, which approximates the \shap-score
by computing a linear model locally around the outcome. We evaluate the accuracy
and the performance difference of the approximation on the FICO dataset.

\begin{figure*}[t]
  \includegraphics[width=.5\textwidth]{FIGS/top1-resp-credit.pdf}\hspace*{-0.1em}
  \includegraphics[width=.5\textwidth]{FIGS/top1-shap-credit.pdf}

  \vspace*{-1em}

  \caption{Distribution of the top ranked features for the (left) \resp-score,
    and (right) \shap-score where the features are bucketized into varying number
    of equi-depth buckets.}
  \label{fig:top1-creditcard}
\end{figure*}

Fig.\ref{fig:top1-kshap} shows the distribution for the top-1 explanation for
the \shap-score and the KernelSHAP approximation. We observe that the
distribution for the two scores is similar: For both scores,
\emph{ExternalRiskEstimate} and \emph{NetFractionRevolvingBurden} are the most
frequent top-1 explanations. Also, the most frequent explanation for the
\resp-score, \emph{MSinceMostRecentInqexcl7days}, is is not a frequent top-1
explanation for KernelSHAP. This shows that the KernelSHAP is more aligned with
the \shap-score, and the discussion on the differences
between the \resp\ and \shap-scores from Section~\ref{sec:rudin} also holds for
the KernelSHAP explanations.

The comparison of the top-4 explanation sets for \shap\ and KernelSHAP shows
that there is a significant tradeoff between the time it takes to compute the
explanation and the accuracy. On the one hand, the KernelSHAP explanation can be
computed significantly faster. In our experiments, KernelSHAP can approximate
the explanation scores for each entity in 0.78 seconds on average, whereas the
exact computation of the \shap-score using the algorithm in
Fig.\ref{fig:algorithm:shap} takes 19.8 seconds on average per entity.  The
comparison of the top-4 explanation sets for \shap\ and KernelSHAP, however,
shows that the scores have limited overlap due to the approximation in
KernelSHAP: For 58\% of the explained instances, the two top-4 explanation sets
overlap only for two or less explanations, and the Jaccard similarity
coefficient is 0.44.

In future work, we plan to investigate alternative approximations for the
\shap-scores, which can lead to performance improvements without significant
accuracy tradeoffs.

\section{Experiments with  the Kaggle Credit Card Fraud Dataset}
\label{appendix:creditcard}


We present our results for the evaluation of the \resp\ and \shap-scores on the
Kaggle Credit Card Fraud
dataset\footnote{https://www.kaggle.com/mlg-ulb/creditcardfraud},which is used
to detect fraudulent credit card transactions.

{\bf Dataset.}  The dataset consists of 284,807 credit card transactions. Each
transaction is described by 30 numerical input variables, out of which 28 are
normalized and anonymized for privacy reasons. The other two features are
\textit{Time} (seconds elapsed between the transaction and the first transaction
in the dataset), and \textit{Amount} (the amount of the transaction). The
dependent variable takes value 1 in case of fraud, and 0 otherwise. The
dependent variable in the dataset is highly unbalanced. Out of all transactions,
only 492 transactions are labeled as fraud.

We normalize \textit{Amount} and \textit{Time} with the scikit-learn
RobustScaler, and then separate the dataset into training and test data. The
test dataset is a random sample of 28,481 entities (10\% of the dataset).

{\bf Classification Model.} We use as classifier a logistic regression
model. Since the dependent variable in the dataset is highly
unbalanced, we trained the classifier over an undersample of the
training data, which has an equal distribution of positive and
negative outcomes.\footnote{See
  \url{https://www.kaggle.com/janiobachmann/credit-fraud-dealing-with-imbalanced-datasets}}
The classifier labels 846 entities in the original test dataset as
fraudulent, which means it misclassifies less than 3\% of the
entities. This results in an ROC-AUC score of 0.95 on the test
dataset.

{\bf Score Computation.}  All features in this dataset are high-precision
continuous variables, which leads to complications in the computation of the
\resp- and \shap-scores. For the \shap-score, the issue is that conditioning on
one feature returns a single entity, and, thus, no representative distribution
to compute the expected value over the outcome. For the \resp-score, the feature
domains are so large that the score becomes very expensive to compute,
especially for non-empty contingency sets.


To mitigate this issue, we bucketize the features into varying numbers of
equi-depth buckets. Each bucket is represented by the mean over the values that
fall into this bucket. Note that, there is a tradeoff between the number of
buckets and the time it takes to compute the score. This tradeoff is reversed
for the two scores. For the \resp-score, as the number of buckets
increases, the computation time increases. For the \shap-score, as the number of
buckets decreases, the computation time increases. We evaluate how sensitive the
two scores are to the number of buckets.

{\bf Evaluation Results.} Fig.~\ref{fig:top1-creditcard} presents the
distribution of the top explanations for the two scores and varying number of
buckets. For the case of 500 buckets, both scores frequently report the
variables $V_4$ and $V_{14}$ as top explanations. Beside these two features,
however, the distributions are not very comparable. For instance, the third most
common top explanation with the \shap-score is $V_{17}$, which is rarely a top
explanations for the \resp-score.

We observe that the distribution of the \resp-score is very consistent when the
number of buckets ranges between 10 and 500. The number of buckets do however
effect the time it takes to compute the score. On our commodity machine, we
computed the \resp-score for 10 buckets in less than 4 minutes (roughly 0.3
seconds per entity), while for 500 buckets it took around 11 hours. This is
mostly due to the computation for contingency sets of size 1, which is quadratic
in the number of features and domain sizes.  For the \shap-score, however, as
the number of buckets increases, the distribution over the features becomes more
uniform. This is because as we increase the number of buckets there are, the
number of entities after conditioning any feature decreases. The time to compute
the \shap-score ranged from 17 minutes for 8000 buckets to over 16 hours for 500
buckets. We attempted to compute the \shap-score for 100 buckets, but the
experiment took longer than the pre-defined timeout of 24 hours.  We conclude
that the \resp-score is robust to the bucketization of the features, while for
\shap-scores there is a tradeoff between the number of buckets and the ability
to attain good explanations.

We further compare the top-4 explanation sets for each entity for both scores
and 500 buckets. The majority of the entites (84\%) share one or two
explanations in the top-4 explanation sets of the two scores, while 10.5\% share
no explanation. Remarkably, for 75.4\% of the entities, the top-explanation for
the \resp-score is also one of the top-4 explanations for the \shap-score. On
the other hand, the top explanation in the \shap-score is one of the top-4
\resp-score explanations for only 67.4\% of the entities. Therefore, we conclude
that there is some overlap in the scores, but there is only a limited overlap
across the top-4 explanations.






\end{document}